\documentclass[11pt,letterpaper]{article}

\usepackage[margin=1in]{geometry}
\usepackage[T1]{fontenc}
\usepackage[utf8]{inputenc}
\usepackage{lmodern}
\usepackage{amsmath,amssymb,amsfonts,amsthm,mathtools}
\usepackage{bm}
\usepackage{natbib}
\usepackage{graphicx}
\usepackage{float}
\usepackage{xcolor}
\usepackage{multirow,array,booktabs}
\usepackage{enumitem}
\usepackage{subcaption}
\usepackage{algorithm}
\usepackage{algorithmic}
\usepackage{tikz}
\usepackage{url}
\usepackage{indentfirst}

\DeclareMathOperator*{\argmin}{arg\,min}

\newcommand{\tran}{^{\top\kern-\scriptspace}}
\newtheorem{proposition}{Proposition}

\numberwithin{equation}{section}
\title{Cross-Dimension and Cross-Modality Relational Tensor Representation Learning for Multimodal Health Data Modeling}
\author{
Jiawei Guo \qquad
Meng Zhao\thanks{Corresponding author:
\texttt{m.zhao@lehigh.edu}}\\[0.4em]
\normalsize Department of Industrial and
Systems Engineering, Lehigh University\\
\normalsize \texttt{jig526@lehigh.edu}
\quad \texttt{m.zhao@lehigh.edu}
}
\date{September 15, 2026}

\begin{document}
\maketitle
\begin{abstract}
Multimodal health data integrate information from heterogeneous clinical sources, creating substantial opportunities for data-driven decision support. Realizing these opportunities, however, requires effectively modeling complex dependencies that arise both within individual modalities and across modalities. This paper proposes \textbf{R}elational \textbf{T}ensor \textbf{R}epresentation (RTR), a unified framework that jointly learns dimension-specific tensor representations and cross-dimensional and cross-modality relationships. RTR introduces learnable relational bridge matrices that maintain relational information while preserving the structural identities of tensor dimensions and modalities. A structured fusion mechanism further integrates factor-level and relationship-level information into compact representations for downstream prediction. Experiments on simulated data and two multimodal neurophysiological datasets, covering emotion recognition and brain activity classification, demonstrate that RTR consistently outperforms representative tensor learning, multimodal fusion, and Transformer-based baselines. These results highlight the value of explicitly modeling cross-dimensional and cross-modality relationships for learning informative representations from heterogeneous multimodal tensor data. The source code is planned for public release to support reproducibility and future research.
\end{abstract}
\noindent\textbf{Keywords:} tensor representation learning; multimodal learning; cross-dimension attention; structured data integration; data fusion; healthcare
\vspace{1em}

\section{Introduction}\label{sec:Intro}
Modern healthcare systems routinely collect high-dimensional and heterogeneous patient information from multiple sources, including physiological signals, medical imaging, wearable devices, laboratory measurements, and electronic health records. Many of these data sources exhibit multidimensional structures that can be naturally represented as tensors \citep{gaw2022multimodal}. For example, physiological signals may vary across channels, time, and frequency, while medical imaging data contain spatial and temporal organization. Integrating these complementary health data provide opportunities to improve clinical decision making and personalize healthcare delivery \citep{zhang2022m3care}. However, learning informative representations from these heterogeneous data remains challenging. Clinically relevant information can be encoded not only in individual measurements but also in dependencies among data dimensions and across modalities \citep{cremonesi2023need}. These dependencies become particularly complex when modalities differ in their dimensional semantics, sampling resolutions, and measurement mechanisms. For example, electroencephalography (EEG) and functional magnetic resonance imaging (fMRI) characterize brain activity through fundamentally different measurement processes and exhibit distinct tensor structures \citep{zhao2023robust}. An effective representation must therefore preserve the structural information associated with individual tensor dimensions while also capturing how information is related both within and across heterogeneous modalities.

Existing representation learning paradigms address different aspects of this challenge but do not fully resolve it. Tensor-based methods provide a principled mechanism for preserving multidimensional structure and learning mode-specific latent factors \citep{zheng2019emotionmeter,wei2024statistical}, whereas multimodal learning methods enable flexible information exchange across heterogeneous data sources \citep{zadeh2017tensor, wu2024deep, peng2024learning}. However, the former typically express relationships among factors through decomposition or coupling mechanisms, while the latter often operate on modality-level embeddings or encode interactions implicitly through feature transformations, attention, or other fusion operations \citep{nagrani2021attention}. Consequently, neither paradigm directly treats tensor factors and their relationships as complementary components of the learned representation. A detailed review and positioning of these research streams is provided in Section~\ref{sec:literature}. 

This distinction motivates a relational perspective on multimodal tensor representation learning. Rather than viewing relationships only as intermediate computational mechanisms through which features are transformed, we consider whether relationships themselves can be learned, maintained, and incorporated into the final representation. Such a formulation is particularly relevant for multimodal tensor data, where individual dimensions have distinct structural identities and dependencies may arise both among dimensions within a modality and across heterogeneous modalities. The resulting challenge is to construct a representation that preserves factor-level structural information while maintaining relationship-level information in a scalable manner.

To address this challenge, we propose Relational Tensor Representation (RTR). The main contributions of this work are threefold. First, we formulate multimodal tensor representation learning from a relational perspective, in which dimension-specific latent factors and their cross-dimensional and cross-modality relationships are jointly maintained as explicit components of the learned representation. Second, we introduce learnable relational bridge matrices to represent and progressively update structural dependencies while preserving the identities of individual tensor dimensions and modalities. The resulting factor-level and relationship-level representations are integrated through a structured dimension-modality fusion mechanism, while the sparse bridge construction scales linearly with the total number of tensor dimensions. Third, we establish structural properties of the proposed RTR and conduct comprehensive evaluations through controlled simulations and two real-world case studies. The results demonstrate consistent improvements over compared baselines across different dependency structures, data scales, and degrees of modality heterogeneity. 

The remainder of this paper is organized as follows. 
Section~\ref{sec:literature} reviews the related literature and positions RTR with respect to existing representation learning methods. 
Section~\ref{sec:method} presents the proposed RTR framework, including dimension-specific tensor representation learning, relational bridge learning, and structured dimension-modality fusion. 
Section~\ref{sec:theory} establishes the structural and theoretical properties of the proposed framework. 
Section~\ref{sec:simulation} evaluates RTR through controlled simulation studies under different cross-dimensional and cross-modality dependency settings. 
Section~\ref{sec:real_world_experiments} presents two real-world case studies on multimodal neurophysiological data. 
Finally, Section~\ref{sec:conclusions} concludes the paper and discusses directions for future research.

\section{Related Work} 
\label{sec:literature}

The proposed Relational Tensor Representation (RTR) framework lies at the intersection of three closely related research streams: tensor representation learning, multimodal representation and fusion learning, and relationship-aware cross-modal learning. Tensor methods provide structured representations that preserve the multiway organization of individual modalities; multimodal learning methods focus on integrating complementary information across heterogeneous data sources; and relationship-aware approaches model interactions among representations. This section reviews these three streams and clarifies the methodological gap addressed by RTR.

\subsection{Tensor Representation Learning}

Tensor representation learning provides a natural framework for modeling high-dimensional data with multiway structure. Classical tensor decomposition methods, including CANDECOMP/PARAFAC (CP) and Tucker decomposition, approximate a high-order tensor through lower-dimensional latent factors while preserving its multilinear organization \citep{kolda2009tensor}. In contrast to vectorization, which collapses multiple dimensions into a single feature space, tensor representations retain the mode-specific structure of the data. This property is particularly useful when tensor modes have distinct physical or semantic interpretations, such as spatial locations, temporal intervals, physiological channels, or experimental conditions. Consequently, tensor models have been widely adopted in applications including signal processing, system monitoring, healthcare analytics, and spatiotemporal data analysis \citep{zhao2024change,sergin2025low}. 

A substantial body of research has extended classical tensor decompositions by incorporating structural assumptions and regularization mechanisms, including sparsity, smoothness, robustness, graph structure, and domain-specific constraints. These extensions improve the ability of tensor models to characterize complex data while retaining interpretable mode-specific latent factors. Another important direction is coupled tensor factorization, which extends tensor analysis from a single structured data source to multiple related sources. Coupled matrix and tensor factorization (CMTF), for example, jointly factorizes heterogeneous matrices and tensors by sharing latent factors along common modes \citep{acar2011all}. By jointly estimating these factors, CMTF enables information to be exchanged across heterogeneous sources and provides a principled framework for data fusion. Subsequent studies have further examined shared and source-specific structures in coupled factorizations, demonstrating both the benefits and limitations of factor sharing when the underlying data sources exhibit partially common latent structures. In biomedical applications, coupled tensor models have similarly been used to identify common and modality-specific patterns across physiological and imaging measurements \citep{zhao2023robust}.

More recently, tensor representation learning has increasingly incorporated deep neural networks to overcome the limited expressiveness of purely multilinear models. Deep tensor decomposition broadly extends conventional tensor factorization by introducing hierarchical or nonlinear mappings into the factorization process \citep{zhao2025deep}. These approaches can generally be viewed as following two complementary directions: linear deep tensor models recursively factorize latent factors to capture hierarchical representations at multiple levels of granularity, whereas nonlinear deep tensor models employ neural networks to characterize dependencies that cannot be adequately represented through multilinear operations alone. For example, DeepTensor parameterizes low-rank tensor components using deep generative networks and estimates them through a reconstruction objective \citep{saragadam2024deeptensor}. The implicit regularization of the neural parameterization enables the model to capture nonlinear signal structures while retaining a low-rank factorized representation. Collectively, these developments demonstrate an important transition from classical multilinear decomposition toward more expressive, hierarchical, and nonlinear tensor representations.

Despite this progress, the dominant representational unit in existing tensor models remains the factor. Classical decompositions learn mode-specific factors; structured tensor models impose additional constraints on these factors; coupled decompositions exchange information through shared or jointly estimated factors; and deep tensor models enhance the expressiveness of factors through hierarchical or nonlinear parameterizations. Relationships among factors are therefore typically expressed implicitly through multilinear reconstruction, shared-factor constraints, coupling structures, or neural transformations rather than being maintained as explicit components of the learned representation. This distinction is particularly important for multimodal tensor data. Consider latent factors associated with different dimensions or modalities. Existing tensor models may allow these factors to influence one another through a shared decomposition or joint optimization objective, but the relationship between them is generally not represented as an independent, learnable object that can subsequently participate in representation construction and downstream prediction. Consequently, information about how two factors are related may remain entangled within the factors themselves.

\subsection{Multimodal Representation and Fusion Learning}

Multimodal learning aims to combine complementary information from heterogeneous data sources to obtain more informative representations and improve downstream prediction. The field encompasses several closely related problems, including representation learning, alignment, fusion, translation, and co-learning \citep{baltrusaitis2019multimodal}. Within this broader landscape, multimodal fusion is commonly categorized according to the stage at which information is integrated, including early, intermediate (or representation-level), late, and hybrid fusion. Recent surveys have further examined these strategies in domain-specific settings. In healthcare, for example, multimodal learning has been organized around heterogeneous data sources, task-oriented modeling strategies, and emerging foundation models \citep{lin2025has}. For EEG-based emotion recognition, \citet{hatipogluyilmaz2026eeg} relate sensor characteristics and acquisition protocols to sensor-, feature-, and decision-level fusion strategies. These studies highlight that the appropriate fusion mechanism depends not only on the prediction task but also on the structural characteristics of the underlying modalities.

One important direction explicitly models higher-order interactions among modality representations. The Tensor Fusion Network (TFN), for example, constructs an outer-product-based representation that captures unimodal, bimodal, and trimodal interactions among modality embeddings \citep{zadeh2017tensor}. While this formulation provides an explicit representation of cross-modal feature combinations, the dimensionality of the resulting fusion tensor grows multiplicatively with the number and size of the modality representations. Low-rank Multimodal Fusion (LMF) addresses this computational challenge by exploiting a low-rank factorization of the multimodal fusion weights, thereby retaining multiplicative cross-modal interactions without explicitly constructing the full high-order fusion tensor \citep{liu2018efficient}. These methods demonstrate the value of modeling interactions beyond simple concatenation. However, the tensor structure in TFN and LMF primarily serves as a mechanism for organizing interactions among learned modality embeddings; it does not necessarily preserve the original mode-specific organization of tensor-valued measurements throughout the representation-learning process.

A second major direction uses attention mechanisms to learn cross-modal interactions adaptively. Transformer-based architectures have become particularly influential because attention allows the relevance of information exchanged across modalities to depend on the observed data. Self-attention captures dependencies within a representation sequence, while cross-attention enables one modality to selectively incorporate information from another. A representative example is the Multimodal Transformer (MulT), which employs directional pairwise cross-modal attention to model interactions between potentially unaligned multimodal sequences \citep{tsai2019multimodal}. Rather than requiring explicit temporal alignment, each modality can attend to relevant elements of another modality across different time steps. Subsequent studies have developed a broad range of Transformer-based multimodal architectures using cross-attention, co-attention, shared attention spaces, and hierarchical fusion mechanisms; these developments are reviewed by \citet{xu2023multimodal}. Collectively, these approaches establish attention as a powerful data-adaptive mechanism for information exchange across heterogeneous representations.

More recently, state-space models have provided an alternative mechanism for multimodal interaction, motivated in part by their computational efficiency for long sequences. Mamba and related selective state-space architectures model sequential dependencies through input-dependent state transitions, and their multimodal extensions introduce mechanisms for exchanging information across modality-specific state trajectories. Coupled Mamba, for example, introduces coupled state-space modeling in which the hidden-state evolution of one modality is influenced by states associated with neighboring modality chains \citep{li2024coupled}. Cross-modal information is therefore incorporated dynamically into the evolution of modality-specific hidden states. This represents a different interaction mechanism from attention-based fusion while similarly emphasizing adaptive information exchange across modalities. Beyond designing increasingly expressive fusion operators, recent research has emphasized how multimodal information is organized and retained within the learned representation. One line of work explicitly distinguishes shared and modality-specific information. Supervised multimodal fission learning (MMFL), for example, decomposes multimodal observations into components shared across all modalities, components shared by subsets of modalities, and modality-specific components while incorporating supervision from the downstream task \citep{mao2026supervised}.

Related work has also examined whether modality-specific and synergistic information survives the fusion process. CoMM aligns augmented joint multimodal representations through contrastive learning while accounting for shared, unique, and synergistic information \citep{dufumier2025align}. From a complementary perspective, \citet{chaudhuri2025collapse} study modality retention through cross-modal feature entanglement and rank bottlenecks, showing that multimodal representations may lose predictive information from individual modalities during fusion and proposing basis reallocation to mitigate this effect. These studies highlight a distinction that is increasingly important in multimodal representation learning: enabling interaction among modalities does not necessarily guarantee that the structural or predictive contributions of the individual modalities are retained.

Taken together, existing multimodal fusion methods differ substantially in their fundamental representation units. Early and intermediate fusion methods operate primarily on feature vectors or modality embeddings; TFN and LMF model multiplicative interactions among modality embeddings; Transformer-based approaches adapt modality sequences through attention; state-space approaches such as Coupled Mamba exchange information through evolving hidden states; and decomposition-based approaches organize information into shared and modality-specific latent components. These methods provide increasingly sophisticated mechanisms for multimodal interaction, but the internal multiway structure of tensor-valued modalities is typically compressed into modality-level representations before, or as part of, cross-modal fusion.

\subsection{Relationship-Aware and Cross-Modal Representation Learning}

Beyond learning strong representations for individual modalities, a growing body of research has emphasized the importance of explicitly characterizing relationships among heterogeneous representations. Depending on the modeling objective, relational information may describe semantic correspondence, feature compatibility, structural similarity, connectivity, or higher-order interaction patterns. Existing relationship-aware methods can be broadly viewed from three perspectives: relationships as interaction mechanisms for exchanging information, relationships as structural constraints for aligning representations, and relationships as explicit structures over which information is propagated.

A prominent example of the first perspective is cross-attention. Given representations from two modalities, cross-attention computes data-dependent compatibility scores that determine how information from one modality contributes to the representation of another. This mechanism has become fundamental to multimodal learning because it allows interactions to vary across samples and representation elements rather than relying on fixed correspondences. For example, MulT employs directional cross-modal attention to capture interactions between potentially unaligned multimodal sequences \citep{tsai2019multimodal}. More broadly, cross-attention and related co-attention mechanisms have been extensively adopted in vision--language learning, audio--visual modeling, multimodal retrieval, and biomedical applications. In these models, relational information primarily functions as an interaction mechanism: attention scores characterize compatibility between representations and are subsequently used to transform or update modality-specific features.

A second perspective treats relational structure as an alignment target. Rather than aligning individual features alone, these approaches seek to preserve or match the relationships among observations or latent representations. Cross-modality Representation Interactive Learning (CRIL), for example, learns modality-specific semantic representations and constructs self-attention matrices that characterize semantic relationships within each modality \citep{huang2023crossmodality}. Consistency among these relational structures is then encouraged across modalities, allowing relationship information from a semantically informative modality to guide representation learning in other modalities. Such methods demonstrate that relational patterns can provide supervision beyond pointwise feature correspondence and that transferring relational structure can improve multimodal alignment.

Graph-based multimodal learning provides a third perspective by representing relational structure explicitly through nodes and edges. Entities, sensors, features, or modality-specific representations can be represented as nodes, while their dependencies are encoded by graph edges; graph neural networks then propagate information according to the resulting connectivity structure \citep{peng2024learning}. Recent graph-based fusion methods have further increased the role of relational structure in multimodal representation learning. EGO fusion, for example, expands feature--relationship graphs and learns coefficients for combining graph expansions \citep{ding2025expansion}, while Graph4MM incorporates multi-hop neighborhood information through hop-diffused attention and employs a querying Transformer for cross-modal fusion \citep{ning2025graph4mm}. In these approaches, graph topology provides an explicit organizational structure through which multimodal information can interact.

Relational modeling is also increasingly important for physiological and sensor data, where dependencies among measurements may carry substantial predictive information. UAGCFNet, for example, models uncertainty in inter-sensor graph connectivity and combines attention-based fusion with contrastive calibration for physiological representation learning \citep{li2025uncertainty}. Such models are particularly relevant to multimodal biomedical data because physiological observations often exhibit structured dependencies across sensors, anatomical locations, or measurement channels. Graph-based representations provide a natural mechanism for capturing these dependencies when meaningful entities and connectivity structures can be defined.
 
Taken together, these studies demonstrate that relationships can enter multimodal learning in fundamentally different ways. A common characteristic, however, is that the relational quantities are typically tied to a particular computational role. For example, as attention weights for feature transformation, alignment structures used by an auxiliary objective, or graph edges governing message passing. This observation motivates an opportunity to formulate a unified relational representation learning framework that jointly models mode-specific tensor factors and their learned relationships, while capturing dependencies both among tensor dimensions within each modality and across different modalities. 

\subsection{Research Gap and Positioning of RTR}
RTR is designed to address the aforementioned research gap. Rather than replacing tensor factors with a unified multimodal embedding, RTR retains dimension-specific latent factors and augments them with pair-indexed relational bridge matrices. The resulting representation therefore contains two complementary forms of information: factor representations that preserve information associated with individual tensor dimensions and relational representations that characterize selected dependencies within and across modalities. The relational bridges are progressively updated using factor-level information and are explicitly retained for subsequent fusion. Consequently, RTR extends the conventional representational unit from factors or modality embeddings alone to a joint representation of tensor factors and their learned relational states. By integrating tensor-structured representations with explicit within- and cross-modality relationships, RTR provides a unified framework that connects tensor representation learning, multimodal fusion, and relationship-aware learning. The next section formalizes the proposed framework and describes how the tensor factors and relational bridges are constructed, updated, and integrated for downstream prediction.

\section{Methodology}
\label{sec:method}

This section presents the proposed RTR framework for representation learning from heterogeneous multimodal tensor data. As shown in Figure~\ref{fig:pipeline}, RTR consists of four major components. First, RTR decomposes each tensor modality into a collection of dimension-specific latent factors, preserving the structural semantics of individual tensor dimensions. Second, the resulting latent factors are mapped into a common embedding space while retaining their dimension and modality identity information. Third, RTR introduces two types of relational bridge matrices: dimension bridges that characterize within-modality (i.e., cross-dimensional) dependencies and modality bridges that characterize cross-modality dependencies. Finally, the learned factor-level representations and relationship-level bridge matrices are jointly fused into a compact representation for downstream prediction.  

\begin{figure}[htbp]
\centering
\includegraphics[width=1\textwidth]{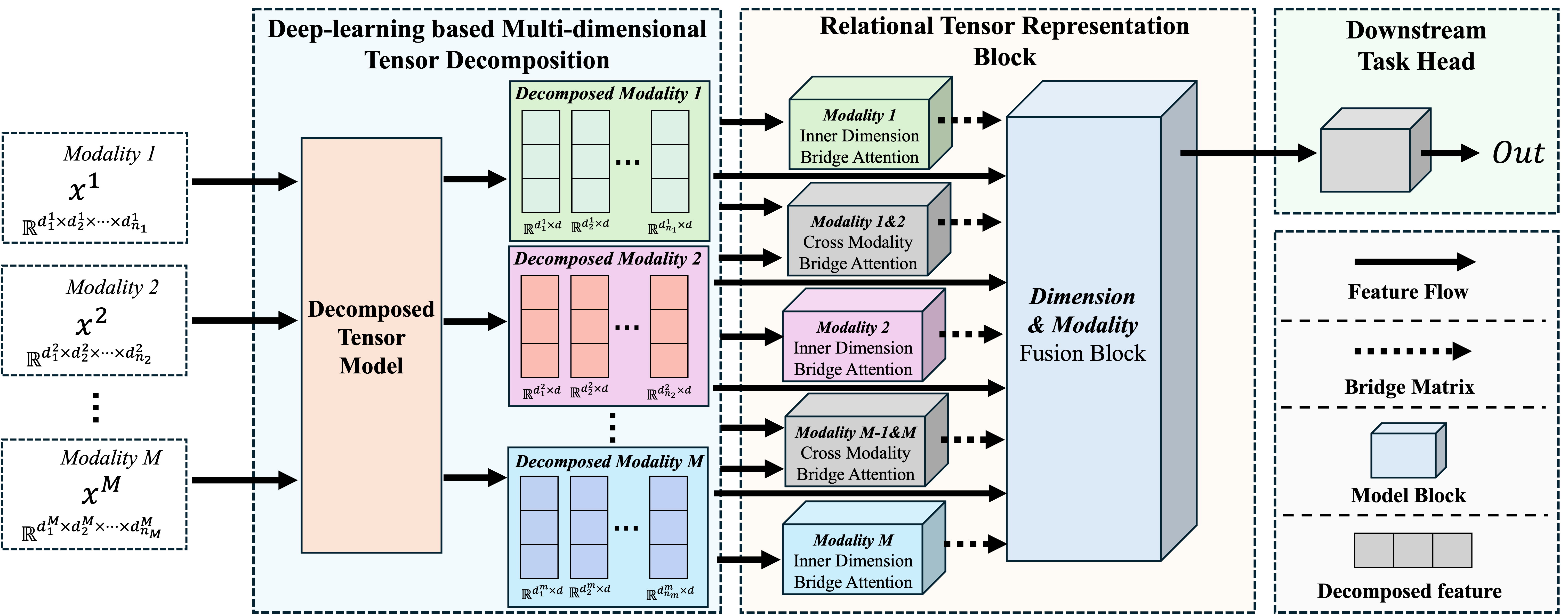}
\caption{Overview of the proposed RTR framework.}
\label{fig:pipeline}
\end{figure}

\subsection{Problem Formulation}
\label{sec:problem}

Consider a multimodal observation consisting of \(M\) heterogeneous tensor modalities,
\[
\mathcal X
=
\left\{
\mathcal X^{(1)},
\mathcal X^{(2)},
\ldots,
\mathcal X^{(M)}
\right\},
\]
where the $m$-th modality is denoted as a $K_m$-order tensor $\mathcal X^{(m)} \in \mathbb R^{d_1^{(m)} \times d_2^{(m)} \times \cdots \times d_{K_m}^{(m)}} (m=1,\ldots,M)$. Each tensor modality may have different tensor orders, dimension sizes, and structural semantics. The objective of RTR is to learn a unified representation
\begin{equation}
\mathcal Z
=
\Phi\!\left(
\mathcal U,
\mathcal B,
\mathcal C
\right),
\end{equation}
where \(\Phi\) represents the structured dimension--modality fusion operation, \(\mathcal U\) denotes the collection of dimension-specific latent factors extracted from all tensor modalities, \(\mathcal B\) denotes the collection of decoded within-modality bridge matrices, and \(\mathcal C\) denotes the collection of decoded cross-modality bridge matrices. Together, they provide a compact and informative representation \(\mathcal Z\) for downstream prediction tasks.  This formulation differs fundamentally from learning a single latent representation from each modality followed by interaction modeling. RTR does not treat cross-dimensional and cross-modality interactions as implicit attention scores only. Instead, it introduces relational bridge matrices as learnable relationship-level representations, enabling structural dependencies to be explicitly stored, updated, and fused. This design principle of RTR motivates the three stages developed below: dimension-specific tensor representation learning, relational bridge learning, and structured dimension-modality fusion.

\subsection{Dimension-Specific Tensor Representation Learning}
\label{sec:dimension_specific_representation}
RTR first extracts a collection of latent factors preserving structural information associated with each dimension. Specifically, each modality \(\mathcal X^{(m)}\) is unfolded along dimension $k$ as
\begin{equation}
\mathbf X_{(k)}^{(m)}
\in
\mathbb R^{
d_k^{(m)}
\times
\prod_{j\neq k}d_j^{(m)}
},
\end{equation}
where the rows of \(\mathbf X_{(k)}^{(m)}\) index the entries associated with dimension \(k\), whereas the columns contain the joint configurations across all remaining dimensions. Each unfolding therefore provides a distinct dimensional view of the tensor. We then obtain the corresponding latent factor through
\begin{equation}
\mathbf U_{m,k} = \sigma \left(\mathbf X_{(k)}^{(m)} \mathbf W_{m,k} \right),
\label{eq:dmtd}
\end{equation}
where 
\(\mathbf W_{m,k}\in
\mathbb R^{\left(\prod_{j\neq k}d_j^{(m)}\right)\times r}\)
is a learnable projection matrix, \(r\) is the latent feature dimension,
\(\mathbf U_{m,k}\in
\mathbb R^{d_k^{(m)}\times r}\)
is the resulting dimension-specific latent factor, and
\(\sigma(\cdot)\) denotes a nonlinear activation function. This projection compresses the joint information from all dimensions other than \(k\) into an \(r\)-dimensional latent space while retaining the \(d_k^{(m)}\) entries associated with the \(k\)-th dimension. Consequently, \(\mathbf U_{m,k}\) preserves the structural identity of the corresponding tensor dimension. Applying Eq.~\eqref{eq:dmtd} to every dimension of modality \(m\) produces
\begin{equation}
    \mathcal U_m = \left\{\mathbf U_{m,1}, \mathbf U_{m,2}, \ldots, \mathbf U_{m,K_m} \right\}.
\end{equation}

Repeating this procedure across all \(M\) modalities gives
\begin{equation} 
\mathcal U =
\left\{
\mathcal U_1,
\mathcal U_2,
\ldots,
\mathcal U_M
\right\}.
\end{equation}

The resulting collection of latent factors $\mathcal U$ is subsequently embedded into a common space for relational bridge learning. This representation preserves dimension-specific information within each modality and provides a consistent factor-level interface for modalities with different tensor orders and dimension sizes. 



\subsection{Relational Bridge Learning}
\label{sec:relational_bridge_learning}
The dimension-specific latent factors characterize the information associated with individual tensor dimensions but do not, by themselves, represent how those dimensions interact. RTR therefore introduces relational bridge matrices to represent and progressively refine structural dependencies among modalities. Each bridge matrix represents a structural relationship either between adjacent tensor dimensions within a modality or between adjacent modalities. This design allows RTR to explicitly store and update cross-dimensional and cross-modality dependencies instead of representing them only through transient attention scores.

\subsubsection{Common-Space Factor Embedding.}
\label{sec:embedded_factor_construction}

Since different tensor modalities may have different dimension sizes, RTR first maps all dimension-specific latent factors into a common embedding space. For each factor \(\mathbf U_{m,k}\), we apply a learnable projection along the dimension axis and transpose the result
\begin{equation}
\mathbf E_{m,k}
=
\left(
\mathbf P_{m,k}\mathbf U_{m,k}
\right)^{\top},
\label{eq:embedding}
\end{equation}
where \(\mathbf P_{m,k}\in\mathbb R^{d_e\times d_k^{(m)}}\) is a learnable dimension-alignment projection matrix and \(\mathbf E_{m,k}\in\mathbb R^{r\times d_e}\) is the embedded factor representation. Here, \(d_e\) denotes the common embedding dimension. This representation treats the latent feature dimension \(r\) as the sequence length and \(d_e\) as the embedding dimension, providing a consistent input format for relational bridge learning. To retain structural identity after projection, RTR adds three learnable embeddings as
\begin{equation}
\mathbf H_{m,k} = \mathbf E_{m,k} + \mathbf P^{\mathrm{intra}}_{m,k} + \mathbf P^{\mathrm{dim}}_k + \mathbf P^{\mathrm{mod}}_m,
\label{eq:pos_embedding}
\end{equation}
where \(\mathbf H_{m,k}\in\mathbb R^{r\times d_e}\) is the final embedded representation of the \(k\)-th dimension in modality \(m\). The intra-factor embedding \(\mathbf P^{\mathrm{intra}}_{m,k}\), dimension embedding \(\mathbf P^{\mathrm{dim}}_k\), and modality embedding \(\mathbf P^{\mathrm{mod}}_m\) are learnable embeddings in \(\mathbb R^{r\times d_e}\) that encode complementary structural information. Specifically, \(\mathbf P^{\mathrm{intra}}_{m,k}\) captures position-dependent variation within the embedded factor, while \(\mathbf P^{\mathrm{dim}}_k\) and \(\mathbf P^{\mathrm{mod}}_m\) encode the identities of the corresponding tensor dimension and modality, respectively. Together, these embeddings preserve the structural identity of each factor after all factors are projected into a common embedding space, allowing RTR to distinguish representations associated with different dimensions and modalities.

\subsubsection{Relational Bridge Construction.}
\label{sec:relational_bridge_construction}
After constructing embedded factors, RTR introduces two types of relational bridge matrices to represent structural dependencies within and across modalities. The first one is a within-modality dimension bridge. For modality \(m\), the dimension bridge \(\mathbf B_{m,k}\) connects two adjacent embedded factors \(\mathbf H_{m,k}\) and \(\mathbf H_{m,k+1}\), representing cross-dimensional dependencies inside each tensor modality. The second type is a cross-modality bridge. Each modality bridge \(\mathbf C_m\) connects two adjacent modalities \(m\) and \(m+1\), providing a structured representation of cross-modality dependencies without requiring fully connected interactions among all factors from different modalities. Each dimension bridge is initialized as follows
\begin{equation}
\mathbf B_{m,k}^{(0)} = \mathbf I_{d_e},
\label{eq:dim_bridge_init}
\end{equation}
where \(\mathbf B_{m,k}^{(0)}\in\mathbb R^{d_e\times d_e}\) is the initial dimension bridge between \(\mathbf H_{m,k}\) and \(\mathbf H_{m,k+1}\), and \(\mathbf I_{d_e}\in\mathbb R^{d_e\times d_e}\) denotes the identity matrix. Similarly, the modality bridge is initialized as 
\begin{equation}
\mathbf C_m^{(0)} = \mathbf I_{d_e}.
\label{eq:mod_bridge_init}
\end{equation}

The bridge matrices are initialized as identity matrices, providing a neutral starting point: before learning, no predefined structural dependency is imposed between tensor dimensions or modalities. During training, these bridge matrices are progressively updated and become data-adaptive relationship-level representations. This construction is intentionally sparse. Instead of assigning a bridge to every possible pair of factors, RTR uses adjacent dimension bridges within each modality and boundary bridges across adjacent modalities. This design preserves information flow across tensor dimensions and modalities while avoiding the quadratic growth of fully connected pairwise interaction modeling.

\subsubsection{Relational Bridge Updating.}
\label{sec:relational_bridge_updating}
RTR then updates each bridge matrix through an attention-based interaction mechanism. Embedded factors provide factor-level query representations, while bridge matrices provide relationship-level key and value representations. This separates the semantic content of each factor from the structural dependency encoded by the bridge. At the \(\ell\)-th bridge update layer, its key and value representations for each dimension bridge are computed as
\begin{equation}
\mathbf K_{m,k}^{\mathrm{dim},(\ell)}
=
\mathbf B_{m,k}^{(\ell)}\mathbf W_K,
\quad
\mathbf V_{m,k}^{\mathrm{dim},(\ell)}
=
\mathbf B_{m,k}^{(\ell)}\mathbf W_V,
\label{eq:dim_bridge_kv}
\end{equation}
where \(\mathbf W_K,\mathbf W_V\in\mathbb R^{d_e\times d_q}\) are learnable projection matrices, and \(\mathbf K_{m,k}^{\mathrm{dim},(\ell)}\) and \(\mathbf V_{m,k}^{\mathrm{dim},(\ell)}\) are the key and value representations induced by the dimension bridge \(\mathbf B_{m,k}^{(\ell)}\). Similarly, each modality bridge generates its key and value representations as follows
\begin{equation}
\mathbf K_{m}^{\mathrm{mod},(\ell)} = \mathbf C_m^{(\ell)}\mathbf W_K,
\quad
\mathbf V_{m}^{\mathrm{mod},(\ell)} = \mathbf C_m^{(\ell)}\mathbf W_V,
\label{eq:mod_bridge_kv}
\end{equation}
where \(\mathbf K_m^{\mathrm{mod},(\ell)}\) and \(\mathbf V_m^{\mathrm{mod},(\ell)}\) are the key and value representations induced by modality bridge \(\mathbf C_m^{(\ell)}\).

For within-modality dimension interaction, RTR first computes factor-level queries
\begin{equation}
\mathbf Q_{m,k}^{\mathrm{dim}}
=
\mathbf H_{m,k}\mathbf W_Q^{\mathrm{dim}},
\quad
k=1,\ldots,K_m,
\label{eq:dim_query_factor}
\end{equation}
where \(\mathbf W_Q^{\mathrm{dim}}\in\mathbb R^{d_e\times d_q}\) is a learnable query projection matrix and \(\mathbf Q_{m,k}^{\mathrm{dim}}\in\mathbb R^{r\times d_q}\) is the query representation of the \(k\)-th embedded factor in modality \(m\). The modality-internal mean query is then computed as
\begin{equation}
\bar{\mathbf Q}_m^{\mathrm{intra}}
=
\frac{1}{K_m}
\sum_{k=1}^{K_m}
\mathbf Q_{m,k}^{\mathrm{dim}}.
\label{eq:intra_mean_query}
\end{equation}
This mean query summarizes the overall factor-level information within modality \(m\) and is used to refine local dimension-level interactions.

For the dimension bridge between \(\mathbf H_{m,k}\) and \(\mathbf H_{m,k+1}\), the two factor-level queries are combined through an element-wise square-root interaction
\begin{equation}
\bar{\mathbf Q}_{m,k}^{\mathrm{dim}}
=
\sqrt{
\left|
\mathbf Q_{m,k}^{\mathrm{dim}}
\odot
\mathbf Q_{m,k+1}^{\mathrm{dim}}
\right|
+
\epsilon
},
\label{eq:dim_pair_query}
\end{equation}
where \(\odot\) denotes element-wise multiplication and \(\epsilon\) is a small constant for numerical stability. The first-stage dimension interaction feature map is computed as
\begin{equation}
\mathbf A_{m,k}^{\mathrm{dim},(\ell)}
=
\operatorname{Softmax}
\left(
\frac{
\bar{\mathbf Q}_{m,k}^{\mathrm{dim}}
\left(\mathbf K_{m,k}^{\mathrm{dim},(\ell)}\right)^{\top}
}
{\sqrt{d_q}}
\right)
\mathbf V_{m,k}^{\mathrm{dim},(\ell)}.
\label{eq:dim_attention}
\end{equation}

To incorporate broader within-modality context, RTR performs a second-stage refinement using the modality-internal mean query. The first-stage feature map is projected into keys and values
\begin{equation}
\widetilde{\mathbf K}_{m,k}^{\mathrm{dim},(\ell)}
=
\mathbf A_{m,k}^{\mathrm{dim},(\ell)}\widetilde{\mathbf W}_K,
\quad
\widetilde{\mathbf V}_{m,k}^{\mathrm{dim},(\ell)}
=
\mathbf A_{m,k}^{\mathrm{dim},(\ell)}\widetilde{\mathbf W}_V,
\label{eq:dim_refine_kv}
\end{equation}
where \(\widetilde{\mathbf W}_K\) and \(\widetilde{\mathbf W}_V\) are learnable projection matrices. The refined dimension interaction feature map is then computed as
\begin{equation}
\widetilde{\mathbf A}_{m,k}^{\mathrm{dim},(\ell)}
=
\operatorname{Softmax}
\left(
\frac{
\bar{\mathbf Q}_m^{\mathrm{intra}}
\left(\widetilde{\mathbf K}_{m,k}^{\mathrm{dim},(\ell)}\right)^{\top}
}
{\sqrt{d_q}}
\right)
\widetilde{\mathbf V}_{m,k}^{\mathrm{dim},(\ell)}.
\label{eq:dim_refine}
\end{equation}
The first stage captures localized interaction between adjacent dimension-specific factors, while the second stage allows the dimension bridge to incorporate modality-level context.

For cross-modality interaction, RTR uses the modality-internal mean queries from adjacent modalities. For the modality bridge between modality \(m\) and modality \(m+1\), the cross-modality query is constructed as
\begin{equation}
\bar{\mathbf Q}_{m}^{\mathrm{mod}}
=
\sqrt{
\left|
\bar{\mathbf Q}_m^{\mathrm{intra}}
\odot
\bar{\mathbf Q}_{m+1}^{\mathrm{intra}}
\right|
+
\epsilon
}.
\label{eq:mod_pair_query}
\end{equation}
The cross-modality interaction feature map is computed as 
\begin{equation}
\mathbf A_m^{\mathrm{mod},(\ell)}
=
\operatorname{Softmax}
\left(
\frac{
\bar{\mathbf Q}_{m}^{\mathrm{mod}}
\left(\mathbf K_{m}^{\mathrm{mod},(\ell)}\right)^{\top}
}
{\sqrt{d_q}}
\right)
\mathbf V_{m}^{\mathrm{mod},(\ell)}.
\label{eq:mod_attention}
\end{equation}
This design enables information propagation across modality boundaries without fully connecting all dimension-specific factors across modalities. The state feature of each bridge is then used to update the corresponding bridge matrix. For dimension bridges, we define
\begin{equation}
\mathbf S_{m,k}^{\mathrm{dim},(\ell)}
=
\widetilde{\mathbf A}_{m,k}^{\mathrm{dim},(\ell)}.
\label{eq:dim_bridge_state}
\end{equation}
The residual update for the dimension bridge is
\begin{equation}
\Delta \mathbf B_{m,k}^{(\ell)}
=
\operatorname{Reshape}
\left(
\operatorname{MLP}_B
\left(
\operatorname{vec}
\left(
\mathbf S_{m,k}^{\mathrm{dim},(\ell)}
\right)
\right)
\right),
\label{eq:dim_bridge_delta}
\end{equation}
\begin{equation}
\mathbf B_{m,k}^{(\ell+1)}
=
\operatorname{LN}
\left(
\mathbf B_{m,k}^{(\ell)}
+
\Delta \mathbf B_{m,k}^{(\ell)}
\right).
\label{eq:dim_bridge_update}
\end{equation}
For modality bridges, we define
\begin{equation}
\mathbf S_m^{\mathrm{mod},(\ell)}
=
\mathbf A_m^{\mathrm{mod},(\ell)}.
\label{eq:mod_bridge_state}
\end{equation}
The residual update for the modality bridge is computed as follows
\begin{equation}
\Delta \mathbf C_m^{(\ell)}
=
\operatorname{Reshape}
\left(
\operatorname{MLP}_C
\left(
\operatorname{vec}
\left(
\mathbf S_m^{\mathrm{mod},(\ell)}
\right)
\right)
\right),
\label{eq:mod_bridge_delta}
\end{equation}
\begin{equation}
\mathbf C_m^{(\ell+1)}
=
\operatorname{LN}
\left(
\mathbf C_m^{(\ell)}
+
\Delta \mathbf C_m^{(\ell)}
\right).
\label{eq:mod_bridge_update}
\end{equation}
Here, \(\operatorname{MLP}_B\) and \(\operatorname{MLP}_C\) map the interaction feature maps back to the bridge-matrix space, \(\operatorname{vec}(\cdot)\) denotes vectorization, \(\operatorname{Reshape}(\cdot)\) reshapes the output to a \(d_e\times d_e\) matrix, and \(\operatorname{LN}(\cdot)\) denotes layer normalization.

All query representations are generated from embedded factors and can be cached after their first computation. Across bridge update layers, the semantic factor representations remain fixed, while the dimension bridges and modality bridges are progressively updated. This separation preserves factor-level semantic identity while allowing relationship-level structures to evolve through learning.

\subsection{Dimension-Modality Fusion}
\label{sec:dimension_modality_fusion}

After relational bridge learning, RTR fuses dimension-specific latent factors and learned bridge matrices into a compact task representation. The goal of this stage is to preserve both factor-level information and relationship-level structural information. Specifically, for the \(k\)-th latent factor of modality \(m\), RTR first constructs a fixed-size latent self-correlation feature as
\begin{equation}
\mathbf G_{m,k}
= \mathbf U_{m,k}^{\top}\mathbf U_{m,k},
\label{eq:latent_gram}
\end{equation}
where \(\mathbf G_{m,k}\in\mathbb R^{r\times r}\) is the latent self-correlation feature of \(\mathbf U_{m,k}\). This operation maps each dimension-specific factor into the shared latent feature space and removes dependence on the original dimension size \(d_k^{(m)}\).

The learned dimension bridges \(\mathbf B_{m,k}^{(L)}\) and modality bridges \(\mathbf C_m^{(L)}\) are decoded into the latent fusion space. Let \(\widehat{\mathbf B}_{m,k}\in\mathbb R^{r\times r}\) denote the decoded dimension bridge between adjacent factors \(\mathbf U_{m,k}\) and \(\mathbf U_{m,k+1}\), and let \(\widehat{\mathbf C}_m\in\mathbb R^{r\times r}\) denote the decoded modality bridge between modalities \(m\) and \(m+1\). The modality-level fused representation is
\begin{equation}
\mathbf F_m
=
\mathbf G_{m,1}
\widehat{\mathbf B}_{m,1}
\mathbf G_{m,2}
\cdots
\widehat{\mathbf B}_{m,K_m-1}
\mathbf G_{m,K_m},
\label{eq:modality_fusion}
\end{equation}
where \(\mathbf F_m\in\mathbb R^{r\times r}\) is the fused representation of modality \(m\). The final cross-modality fused feature is then computed as
\begin{equation}
\mathbf F
=
\mathbf F_1
\widehat{\mathbf C}_1
\mathbf F_2
\cdots
\widehat{\mathbf C}_{M-1}
\mathbf F_M,
\label{eq:cross_modality_fusion}
\end{equation}
where \(\mathbf F\in\mathbb R^{r\times r}\) is the final cross-modality fused feature.

The fused feature and bridge matrices are stacked to form the final representation \(\mathcal Z\in\mathbb R^{N_z\times r\times r}\) 
\begin{equation}
\mathcal Z
=
\operatorname{Stack}
\left(
\mathbf F,
\{\widehat{\mathbf B}_{m,k}\},
\{\widehat{\mathbf C}_m\}
\right),
\label{eq:final_stack}
\end{equation}
where \(N_z\) is the total number of stacked fusion and bridge features. The downstream prediction head takes \(\mathcal Z\) as input, allowing it to use both the final fused representation and the learned relational structures. This fusion design has two advantages. First, it avoids discarding the bridge matrices after relational learning; instead, the learned relationship-level information is retained in the final representation. Second, by using fixed-size self-correlation features and decoded bridge matrices in the \(r\times r\) latent space, the final representation remains independent of the original tensor dimension sizes, making RTR applicable to heterogeneous multimodal tensors with different shapes.

\subsection{Model Training}
\label{sec:model_training}

RTR can be trained end-to-end for downstream supervised learning tasks. Given a training set \(\{(\mathcal X_i,y_i)\}_{i=1}^{n}\), the model first extracts dimension-specific factors, updates relational bridge matrices, constructs the fused representation \(\mathcal Z_i\), and then feeds \(\mathcal Z_i\) into a task-specific prediction head. For classification tasks, the training objective is
\begin{equation}
\argmin_{\Theta}
\frac{1}{n}
\sum_{i=1}^{n}
\mathcal L
\left(
f_{\Theta}(\mathcal Z_i),
y_i
\right)
+
\lambda
\Omega(\Theta),
\label{eq:training_objective}
\end{equation}
where \(f_{\Theta}(\cdot)\) denotes the prediction model, \(\mathcal L(\cdot,\cdot)\) is the classification loss, \(\Omega(\Theta)\) is a regularization term, and \(\lambda\) controls the regularization strength. The parameters \(\Theta\) include the dimension-specific projection matrices, embedding parameters, bridge update parameters, fusion parameters, and prediction head parameters.

Since all components are differentiable, RTR can be optimized using standard gradient-based methods. During training, the latent factors learn dimension-specific information, while the bridge matrices progressively adapt to the structural dependencies useful for the downstream task. This end-to-end training scheme allows RTR to learn task-relevant multimodal tensor representations without requiring hand-specified interaction rules.

\section{Structural Properties of RTR}
\label{sec:theory}

This section analyzes several structural properties of the proposed RTR. The goal is not to establish a generalization bound, but to state which properties follow from the specified architecture and under what conditions. The first two propositions formalize the roles of factors and bridges; the remaining propositions concern bridge count, storage, initialization, residual updating, and fusion. These structural properties do not by themselves establish recovery of ground-truth relationships or improved predictive performance.

\begin{proposition}[Bridge Matrices as Relationship-level Representations]\leavevmode\par
Let \(\mathbf H_{m,k}\) and \(\mathbf H_{m,k+1}\) denote two adjacent embedded dimension-specific factors within modality \(m\), and let \(\mathbf B_{m,k}^{(\ell)}\) denote their dimension bridge at layer \(\ell\). For fixed projection matrices, the bridge-induced key and value representations are
\[
\mathbf K_{m,k}^{\mathrm{dim},(\ell)}
=
\mathbf B_{m,k}^{(\ell)}\mathbf W_K,
\qquad
\mathbf V_{m,k}^{\mathrm{dim},(\ell)}
=
\mathbf B_{m,k}^{(\ell)}\mathbf W_V.
\]
For any perturbation \(\mathbf D\in\mathbb R^{d_e\times d_e}\), replacing \(\mathbf B_{m,k}^{(\ell)}\) by \(\mathbf B_{m,k}^{(\ell)}+\mathbf D\) changes these representations by \(\mathbf D\mathbf W_K\) and \(\mathbf D\mathbf W_V\), respectively. The same statement holds for a modality bridge \(\mathbf C_m^{(\ell)}\). Thus, a relationship-level representation is an explicit state assigned to a prescribed factor or modality connection and used to construct its attention inputs.
\end{proposition}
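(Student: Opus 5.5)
The argument uses only the definitions in Eqs.~\eqref{eq:dim_bridge_kv} and \eqref{eq:mod_bridge_kv} together with linearity of matrix multiplication. Fix the layer index \(\ell\) and hold the projection matrices \(\mathbf W_K,\mathbf W_V\in\mathbb R^{d_e\times d_q}\) constant. Define the perturbed bridge \(\mathbf B' = \mathbf B_{m,k}^{(\ell)}+\mathbf D\) and its induced key \(\mathbf K' = \mathbf B'\mathbf W_K\). By right-distributivity of matrix multiplication,
\[
\mathbf K' = \mathbf B_{m,k}^{(\ell)}\mathbf W_K + \mathbf D\mathbf W_K = \mathbf K_{m,k}^{\mathrm{dim},(\ell)} + \mathbf D\mathbf W_K .
\]
The same computation with \(\mathbf W_V\) gives the value change \(\mathbf D\mathbf W_V\). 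This holds for every \(\mathbf D\in\mathbb R^{d_e\times d_e}\), since no assumption on \(\mathbf D\) is used.

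The modality-bridge case follows verbatim. Eq.~\eqref{eq:mod_bridge_kv} has the identical form \(\mathbf C_m^{(\ell)}\mathbf W_K\) and \(\mathbf C_m^{(\ell)}\mathbf W_V\), with the same shared projection matrices, so replacing \(\mathbf C_m^{(\ell)}\) by \(\mathbf C_m^{(\ell)}+\mathbf D\) shifts the key and value by \(\mathbf D\mathbf W_K\) and \(\mathbf D\mathbf W_V\).

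The interpretive conclusion then follows by indexing. \(\mathbf B_{m,k}^{(\ell)}\) is indexed by the pair \((m,k)\), that is, by the connection between \(\mathbf H_{m,k}\) and \(\mathbf H_{m,k+1}\). It enters Eq.~\eqref{eq:dim_attention} only through \(\mathbf K_{m,k}^{\mathrm{dim},(\ell)}\) and \(\mathbf V_{m,k}^{\mathrm{dim},(\ell)}\). The query side, Eqs.~\eqref{eq:dim_query_factor}--\eqref{eq:dim_pair_query}, depends only on the embedded factors and not on any bridge. Hence the bridge is a state attached to that prescribed connection, separate from the factor representations, and it determines exactly the attention inputs of that connection. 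The analogous reading holds for \(\mathbf C_m^{(\ell)}\), using Eqs.~\eqref{eq:mod_pair_query}--\eqref{eq:mod_attention}.

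There is no real obstacle here. The only care needed is to state precisely what is held fixed: the projections, and also the queries, which do not depend on the bridge. With that stated, the claimed changes are exactly linear and need no first-order approximation. The claim should not be extended to the softmax outputs or the updated bridges, which depend nonlinearly on \(\mathbf D\).
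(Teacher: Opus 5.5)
Your proposal is correct and matches the paper's proof. The shifts come from right-distributivity, \((\mathbf B_{m,k}^{(\ell)}+\mathbf D)\mathbf W_K-\mathbf B_{m,k}^{(\ell)}\mathbf W_K=\mathbf D\mathbf W_K\) (likewise with \(\mathbf W_V\) and for \(\mathbf C_m^{(\ell)}\)), and the interpretive conclusion rests on each bridge being indexed by a prescribed adjacent dimension or modality pair. The paper also grounds the ``explicit state'' reading in the bridges being carried between update layers rather than recomputed as attention weights, and notes that \(\mathbf D\mathbf W_K\) can vanish for nonzero \(\mathbf D\); you instead emphasize that the queries do not depend on the bridge, which complements rather than departs from the paper's argument.
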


\begin{proof}
Subtracting the original key representation from the perturbed one gives \((\mathbf B_{m,k}^{(\ell)}+\mathbf D)\mathbf W_K-\mathbf B_{m,k}^{(\ell)}\mathbf W_K=\mathbf D\mathbf W_K\). The value representation satisfies the same identity with \(\mathbf W_V\). This is an algebraic property of the bridge parameterization; it does not require a semantic interpretation of the learned matrix entries.

RTR assigns \(\mathbf B_{m,k}^{(\ell)}\) to an adjacent dimension pair and \(\mathbf C_m^{(\ell)}\) to an adjacent modality pair. These indexed states are carried between update layers, rather than being defined solely as attention weights. This establishes the stated representation role. A nonzero perturbation need not change the keys or values if it is annihilated by the corresponding projection, and a change in either does not necessarily change the final prediction.
\end{proof}


\begin{proposition}[Separation of Factor-level and Relationship-level Representations]\leavevmode\par
Let \(\mathbf H_{m,k}\) be the embedded representation of the \(k\)-th tensor dimension in modality \(m\). Suppose the embedded factors are held fixed during the bridge-update layers, as specified in the relational bridge learning stage. Then these updates modify \(\mathbf B_{m,k}^{(\ell)}\) and \(\mathbf C_m^{(\ell)}\) without modifying \(\mathbf H_{m,k}\). Factor-level and relationship-level representations therefore occupy separate computational states.
\end{proposition}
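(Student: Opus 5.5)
The proof is a dependency trace through the bridge-update equations: show that one layer's outputs are bridge matrices only, and that \(\mathbf H_{m,k}\) never appears as an output. I will fix an arbitrary layer \(\ell\) and list its inputs and outputs. The inputs are the embedded factors \(\{\mathbf H_{m,k}\}\), the current bridges \(\{\mathbf B_{m,k}^{(\ell)}\}\) and \(\{\mathbf C_m^{(\ell)}\}\), and the shared parameters \(\mathbf W_Q^{\mathrm{dim}},\mathbf W_K,\mathbf W_V,\widetilde{\mathbf W}_K,\widetilde{\mathbf W}_V,\operatorname{MLP}_B,\operatorname{MLP}_C\). By Eqs.~\eqref{eq:dim_bridge_update} and \eqref{eq:mod_bridge_update}, the only quantities written at the end of the layer are \(\mathbf B_{m,k}^{(\ell+1)}\) and \(\mathbf C_m^{(\ell+1)}\).

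Next I will classify every intermediate quantity as read-only with respect to the factors. The queries \(\mathbf Q_{m,k}^{\mathrm{dim}}\) in Eq.~\eqref{eq:dim_query_factor}, the mean query \(\bar{\mathbf Q}_m^{\mathrm{intra}}\) in Eq.~\eqref{eq:intra_mean_query}, and the pair queries in Eqs.~\eqref{eq:dim_pair_query} and \eqref{eq:mod_pair_query} are functions of \(\mathbf H\) alone. They carry no superscript \(\ell\), which is consistent with the paper's remark that they can be cached. The keys and values in Eqs.~\eqref{eq:dim_bridge_kv} and \eqref{eq:mod_bridge_kv} are functions of the bridges. The attention maps \(\mathbf A\) and \(\widetilde{\mathbf A}\), the states \(\mathbf S\), and the residuals \(\Delta\mathbf B\), \(\Delta\mathbf C\) combine the two. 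Every equation therefore assigns only to a query, an attention map, a state, a residual or a bridge. No equation has \(\mathbf H_{m,k}\) on its left-hand side, so the update map has the form
\[
\bigl(\{\mathbf H\},\{\mathbf B^{(\ell)}\},\{\mathbf C^{(\ell)}\}\bigr)\mapsto\bigl(\{\mathbf H\},\{\mathbf B^{(\ell+1)}\},\{\mathbf C^{(\ell+1)}\}\bigr),
\]
with the identity on the first component.

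Induction on \(\ell=0,\dots,L-1\) then gives the result. The base case is that \(\mathbf H_{m,k}\) is produced once by Eq.~\eqref{eq:pos_embedding}, before the bridges are initialized in Eqs.~\eqref{eq:dim_bridge_init} and \eqref{eq:mod_bridge_init}. The inductive step is the one-layer observation above. I conclude that after all \(L\) layers the factors equal their pre-bridge values while the bridges have evolved. In this sense the two kinds of representation are separate computational states: the factors influence the bridges only through queries, and nothing flows back to the factors.

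The main obstacle is stating the scope precisely rather than any calculation. The claim holds \emph{within} a forward pass. During end-to-end training, gradients of Eq.~\eqref{eq:training_objective} do update the parameters that produce \(\mathbf H_{m,k}\), namely \(\mathbf W_{m,k}\), \(\mathbf P_{m,k}\) and the positional embeddings, and these gradients pass partly through the bridges. I will add that caveat, in the same spirit as the caveat in the previous proposition. I will also note that the separation does not imply the two representations are statistically independent.
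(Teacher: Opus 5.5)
Your proposal is correct and follows the paper's own argument: the embedded factors $\mathbf H_{m,k}$ enter the bridge-update layers only as read-only inputs (through the cached queries), while $\mathbf B_{m,k}^{(\ell)}$ and $\mathbf C_m^{(\ell)}$ are the only states rewritten, so the two occupy separate computational states. Your equation-by-equation check that no update writes to $\mathbf H_{m,k}$, together with the induction over $\ell$, is a more explicit version of the paper's appeal to the ``held fixed'' specification, in that it derives that hypothesis from the update equations rather than assuming it; your caveats about joint end-to-end training and the absence of statistical independence match the paper's closing remark.
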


\begin{proof}
The embedded factor \(\mathbf H_{m,k}\) is constructed from the dimension-specific latent factor \(\mathbf U_{m,k}\) together with intra-factor, dimension, and modality embeddings. Therefore, \(\mathbf H_{m,k}\) depends on the information contained in the corresponding tensor dimension and its structural identity.

On the other hand, the dimension bridge \(\mathbf B_{m,k}^{(\ell)}\) is assigned to the relationship between \(\mathbf H_{m,k}\) and \(\mathbf H_{m,k+1}\), while the modality bridge \(\mathbf C_m^{(\ell)}\) is assigned to the relationship between modalities \(m\) and \(m+1\). These bridge matrices are updated as separate states and generate key and value representations; the embedded factors remain fixed during this stage. Hence, the updates preserve the distinction between the two sets of variables. This separation does not imply statistical independence, disentanglement, or identifiability, since both sets of variables are derived from the same observations and learned jointly.
\end{proof}

The separation of factor and bridge variables provides an interface for diagnostic analysis and model design. If all tensor factors are simply concatenated into a single token sequence, the model may learn useful interactions, but it is difficult to distinguish whether performance comes from the quality of factor representations or from learned relationships among factors. In RTR, the bridge matrices provide an explicit structural layer between tensor factors, allowing factor perturbations and bridge perturbations to be examined separately.

\begin{proposition}[Linear Growth of Bridge Matrices]\leavevmode\par
Let \(K_m\) denote the number of dimension-specific latent factors in modality \(m\), with \(M\geq 1\) and \(K_m\geq 1\) for every modality, and let
\[
K_{\mathrm{tot}}
=
\sum_{m=1}^{M}K_m
\]
denote the total number of latent factors across all modalities. If RTR constructs exactly one bridge for each adjacent factor pair within a modality and one boundary bridge for each adjacent modality pair, with no additional bridges, then the total number of bridge matrices is
\[
N_B
=
\sum_{m=1}^{M}(K_m-1)+(M-1)
=
K_{\mathrm{tot}}-1.
\]
Therefore, the number of bridge matrices grows linearly with the total number of tensor dimensions across all modalities.
\end{proposition}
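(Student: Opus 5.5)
The argument is a counting argument, and I would carry it out in three steps: count the within-modality bridges, count the cross-modality bridges, and add.

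\emph{Within-modality bridges.} Following the construction in Section~\ref{sec:relational_bridge_construction}, modality \(m\) has one dimension bridge \(\mathbf B_{m,k}\) for each adjacent pair \((k,k+1)\) with \(1\le k\le K_m-1\). These pairs are in bijection with \(\{1,\ldots,K_m-1\}\), so modality \(m\) contributes exactly \(K_m-1\) bridges. The hypothesis \(K_m\ge 1\) is needed here so that this count is a nonnegative integer. When \(K_m=1\) the index set is empty and the count is zero, which is consistent with the formula.

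\emph{Cross-modality bridges.} The modality bridges \(\mathbf C_m\) are indexed by adjacent pairs \((m,m+1)\) with \(1\le m\le M-1\), which gives \(M-1\) bridges. Because \(M\ge1\), this count is also nonnegative.

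\emph{Summing.} The two families are disjoint, since they carry different index types, and by hypothesis there are no other bridges. Hence
\[
N_B=\sum_{m=1}^{M}(K_m-1)+(M-1)=K_{\mathrm{tot}}-M+M-1=K_{\mathrm{tot}}-1 .
\]
This quantity is affine in \(K_{\mathrm{tot}}\), which gives the linear-growth claim. To contrast it with fully connected modeling, I would note that assigning a bridge to every pair of factors would require \(\binom{K_{\mathrm{tot}}}{2}\) bridges, which is quadratic in \(K_{\mathrm{tot}}\).

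\emph{Structural view.} One can also see the result by viewing the factors as \(K_{\mathrm{tot}}\) nodes and the bridges as edges. The edges form a single path: each modality's factors are chained in order, and consecutive modalities are joined by one boundary bridge. A path on \(K_{\mathrm{tot}}\) nodes has \(K_{\mathrm{tot}}-1\) edges.

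\emph{Main obstacle.} There is no real technical difficulty. The only care needed is in the interpretation of the construction: one must stipulate that the boundary bridge between modalities \(m\) and \(m+1\) counts as a single edge in the path, even though it links modality-level summaries rather than two specific factors. With that stipulation the identification with a path is valid, and the direct counting argument above does not depend on it anyway.
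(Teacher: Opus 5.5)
Your proof is correct and matches the paper's argument: count \(K_m-1\) adjacent-pair bridges in each modality, add the \(M-1\) boundary bridges, and simplify \(\sum_{m}K_m-M+M-1=K_{\mathrm{tot}}-1\). The path-graph view and your caveat that the boundary bridge links modality-level summaries rather than specific factors are sound optional additions, and your \(\binom{K_{\mathrm{tot}}}{2}\) comparison matches the remark the paper gives right after its proof.
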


\begin{proof}
In modality \(m\), RTR connects adjacent dimension-specific factors. Since modality \(m\) has \(K_m\) factors, the number of within-modality bridges is \(K_m-1\). Summing over all modalities gives
\[
\sum_{m=1}^{M}(K_m-1)
\]
within-modality bridges. In addition, RTR constructs cross-modality boundary bridges between adjacent modalities. For \(M\) modalities, the number of such boundary bridges is \(M-1\). Therefore, the total number of bridge matrices is
\[
N_B
=
\sum_{m=1}^{M}(K_m-1)+(M-1)
=
\sum_{m=1}^{M}K_m-M+M-1
=
K_{\mathrm{tot}}-1.
\]
This completes the proof.
\end{proof}

This property shows that RTR avoids the quadratic growth associated with fully connected pairwise interaction modeling. If all \(K_{\mathrm{tot}}\) latent factors are connected with all possible pairwise relationships, the number of pairwise interactions is
\[
\frac{K_{\mathrm{tot}}(K_{\mathrm{tot}}-1)}{2}
=
O(K_{\mathrm{tot}}^2).
\]
In contrast, the bridge construction in RTR requires only \(K_{\mathrm{tot}}-1\) bridge matrices, which scales as \(O(K_{\mathrm{tot}})\). This linear growth is particularly important for multimodal tensor data because the total number of tensor dimensions can increase quickly as more modalities are added. The trade-off is that nonadjacent pairs do not receive dedicated bridge states. The result counts explicit bridges, not all computations performed by the model.

\begin{proposition}[Scalable Bridge State Representation]\leavevmode\par
Assume each bridge matrix has size \(d_e\times d_e\). Under the bridge construction in Proposition 3, for one sample at one update layer the total number of stored bridge-state entries is
\[
S_B
=
N_B d_e^2
=
(K_{\mathrm{tot}}-1)d_e^2.
\]
Thus, for fixed embedding dimension \(d_e\), the number of stored bridge-state entries grows linearly with the total number of tensor dimensions.
\end{proposition}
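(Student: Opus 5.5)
The plan is to reduce the statement to a direct counting argument built on Proposition 3.

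First, I will fix one sample and one update layer \(\ell\). The set of bridge states stored at that layer is then
\[
\{\mathbf B_{m,k}^{(\ell)} : 1\le m\le M,\ 1\le k\le K_m-1\}\cup\{\mathbf C_m^{(\ell)} : 1\le m\le M-1\}.
\]
This follows from the initializations in Eqs.~\eqref{eq:dim_bridge_init}--\eqref{eq:mod_bridge_init} and the updates in Eqs.~\eqref{eq:dim_bridge_update} and \eqref{eq:mod_bridge_update}. Each update maps a \(d_e\times d_e\) matrix to a \(d_e\times d_e\) matrix: the Reshape produces \(d_e\times d_e\), and layer normalization preserves shape. So, by induction on \(\ell\) starting from the identity initialization, every bridge state at every layer lies in \(\mathbb R^{d_e\times d_e}\). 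This justifies the size assumption of the statement as a consequence of the architecture rather than merely an extra hypothesis.

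Second, each such matrix contributes exactly \(d_e^2\) entries. The bridges are distinct indexed states, so their entry counts add with no overlap. By Proposition 3 the number of bridges is \(N_B=K_{\mathrm{tot}}-1\), and therefore
\[
S_B=\sum_{\text{bridges}} d_e^2=N_B d_e^2=(K_{\mathrm{tot}}-1)d_e^2.
\]
With \(d_e\) held fixed, \(S_B\) is an affine function of \(K_{\mathrm{tot}}\) with slope \(d_e^2\), so it is \(O(K_{\mathrm{tot}})\). For contrast I will note the fully connected alternative, which would require \(\tfrac{K_{\mathrm{tot}}(K_{\mathrm{tot}}-1)}{2}d_e^2\) entries.

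There is no real obstacle. The only point needing care is scope: the count covers the bridge states at a single layer for a single sample. It excludes the shared parameters \(\mathbf W_K\), \(\mathbf W_V\), \(\operatorname{MLP}_B\), and \(\operatorname{MLP}_C\), as well as intermediate attention maps and the decoded \(r\times r\) bridges. Across \(L\) retained layers or a batch of size \(n\), the count simply multiplies by \(L\) or \(n\), which preserves linearity in \(K_{\mathrm{tot}}\).
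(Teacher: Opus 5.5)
Your proposal is correct and follows the paper's proof: each of the \(N_B = K_{\mathrm{tot}}-1\) bridges from Proposition 3 contributes \(d_e^2\) entries, so \(S_B=(K_{\mathrm{tot}}-1)d_e^2\), which is linear in \(K_{\mathrm{tot}}\) for fixed \(d_e\). Your induction through the Reshape and layer-normalization updates, showing every bridge state stays in \(\mathbb R^{d_e\times d_e}\), is a harmless addition; the paper simply takes this size as the stated assumption.
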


\begin{proof}
Each dimension bridge \(\mathbf B_{m,k}^{(\ell)}\) and each modality bridge \(\mathbf C_m^{(\ell)}\) contains \(d_e^2\) scalar entries. From Proposition 3, the total number of bridge matrices is \(N_B=K_{\mathrm{tot}}-1\). Therefore, the total number of stored bridge-state entries is
\[
S_B
=
N_B d_e^2
=
(K_{\mathrm{tot}}-1)d_e^2.
\]
For fixed \(d_e\), this quantity grows linearly with \(K_{\mathrm{tot}}\).
\end{proof}

The above result provides a more explicit view of the scalability of RTR. Although each bridge matrix is a dense relationship representation, the total number of such matrices is controlled by the sparse bridge graph. Therefore, RTR can use expressive matrix-valued relational representations without requiring all factor pairs to be connected. This design provides a balance between structural expressiveness and computational scalability. The count \(S_B\) is a state-storage measure, not the total number of trainable model parameters. It excludes projections, update networks, decoders, and the prediction head, as well as intermediate activations retained during training. Increasing \(d_e\) increases this state count quadratically.

\begin{proposition}[Neutrality of Identity Bridge Multiplication]\leavevmode\par
If each dimension bridge and modality bridge is initialized as an identity matrix
\[
\mathbf B_{m,k}^{(0)}
=
\mathbf I_{d_e},
\qquad
\mathbf C_m^{(0)}
=
\mathbf I_{d_e},
\]
then direct right multiplication by an initialized bridge leaves any compatible embedded factor unchanged: \(\mathbf H\mathbf B_{m,k}^{(0)}=\mathbf H\) and \(\mathbf H\mathbf C_m^{(0)}=\mathbf H\). This neutrality applies to the bridge matrix as a multiplier, not to the complete attention or fusion module.
\end{proposition}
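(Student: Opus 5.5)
The plan is a direct verification from the initialization rules \eqref{eq:dim_bridge_init} and \eqref{eq:mod_bridge_init}, followed by a remark on the scope of the claim. First I fix what ``compatible'' means. Each bridge lies in \(\mathbb R^{d_e\times d_e}\), so right multiplication \(\mathbf H\mathbf B\) is defined exactly when \(\mathbf H\) has \(d_e\) columns. By \eqref{eq:pos_embedding}, every embedded factor \(\mathbf H_{m,k}\in\mathbb R^{r\times d_e}\) meets this condition, and I will state the result for an arbitrary \(\mathbf H\in\mathbb R^{p\times d_e}\) with any \(p\ge 1\).

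The core step is the identity computation, done entrywise. For \(\mathbf H\in\mathbb R^{p\times d_e}\),
\[
\bigl(\mathbf H\mathbf I_{d_e}\bigr)_{ij}=\sum_{s=1}^{d_e}H_{is}\,\delta_{sj}=H_{ij},
\]
so \(\mathbf H\mathbf I_{d_e}=\mathbf H\). Substituting \(\mathbf B_{m,k}^{(0)}=\mathbf I_{d_e}\) gives \(\mathbf H\mathbf B_{m,k}^{(0)}=\mathbf H\). The identical substitution with \(\mathbf C_m^{(0)}=\mathbf I_{d_e}\) gives \(\mathbf H\mathbf C_m^{(0)}=\mathbf H\). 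No property of \(\mathbf H\) beyond its column count is used, so the conclusion holds for every compatible embedded factor, every modality \(m\), and every admissible index \(k\).

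The only substantive step is the scope caveat, and I expect it to be the main obstacle, since it concerns what the proposition does \emph{not} claim. I plan to illustrate it with the key and value construction \eqref{eq:dim_bridge_kv} at \(\ell=0\). There \(\mathbf K_{m,k}^{\mathrm{dim},(0)}=\mathbf W_K\) and \(\mathbf V_{m,k}^{\mathrm{dim},(0)}=\mathbf W_V\). The attention output in \eqref{eq:dim_attention} is then \(\operatorname{Softmax}(\bar{\mathbf Q}\mathbf W_K^{\top}/\sqrt{d_q})\mathbf W_V\), which in general differs from \(\bar{\mathbf Q}\). Likewise, the MLP residual in \eqref{eq:dim_bridge_delta} and the layer normalization in \eqref{eq:dim_bridge_update} need not preserve \(\mathbf I_{d_e}\). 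In the fusion stage, the decoded bridges \(\widehat{\mathbf B}_{m,k}\) need not equal \(\mathbf I_r\).

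These observations justify the final sentence of the statement: neutrality holds for the bridge as a multiplier only, not for the attention or fusion modules as a whole. No earlier proposition is needed, apart from the parallel roles of \(\mathbf B\) and \(\mathbf C\) noted in Proposition 1, which let one argument cover both bridge types.
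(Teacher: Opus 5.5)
Your proposal is correct and follows essentially the same route as the paper: verify \(\mathbf H\mathbf I_{d_e}=\mathbf H\) directly (entrywise for any number of rows, where the paper states it for row vectors and \(r\times d_e\) matrices), substitute the identity initializations for \(\mathbf B_{m,k}^{(0)}\) and \(\mathbf C_m^{(0)}\), and then note that at initialization \(\mathbf K=\mathbf W_K\) and \(\mathbf V=\mathbf W_V\), so attention, normalization, and decoding need not act as the identity. Nothing is missing.
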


\begin{proof}
For any row vector \(\mathbf h\in\mathbb R^{1\times d_e}\), multiplication by the identity matrix gives
\[
\mathbf h\mathbf I_{d_e}
=
\mathbf h.
\]
Similarly, for any matrix \(\mathbf H\in\mathbb R^{r\times d_e}\),
\[
\mathbf H\mathbf I_{d_e}
=
\mathbf H.
\]
These identities establish neutrality under direct multiplication. In the attention construction, however, an identity bridge gives \(\mathbf K=\mathbf W_K\) and \(\mathbf V=\mathbf W_V\); learned projections and attention can therefore act nontrivially even before the bridge is updated. The identity property does not extend automatically through attention, decoding, or normalization.
\end{proof}


\begin{proposition}[Residual Bridge Updating Carries Forward the Previous State]\leavevmode\par
Let the dimension bridge and modality bridge updates be
\[
\mathbf B_{m,k}^{(\ell+1)}
=
\operatorname{LN}
\left(
\mathbf B_{m,k}^{(\ell)}
+
\Delta \mathbf B_{m,k}^{(\ell)}
\right),
\]
\[
\mathbf C_m^{(\ell+1)}
=
\operatorname{LN}
\left(
\mathbf C_m^{(\ell)}
+
\Delta \mathbf C_m^{(\ell)}
\right).
\]
Then the pre-normalization update explicitly includes the previous bridge state. In particular, its change relative to \(\mathbf B_{m,k}^{(\ell)}\) has Frobenius norm \(\|\Delta\mathbf B_{m,k}^{(\ell)}\|_F\), with the analogous result for \(\mathbf C_m^{(\ell)}\). This is a statement about the residual update before normalization, not lossless retention of the previous state.
\end{proposition}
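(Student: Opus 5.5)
The plan is to work directly with the pre-normalization quantity, in the same way the proof of Proposition 1 handles the key and value perturbation. Define the pre-normalization bridge state
\[
\widetilde{\mathbf B}_{m,k}^{(\ell+1)}
=
\mathbf B_{m,k}^{(\ell)}+\Delta \mathbf B_{m,k}^{(\ell)},
\]
so that \(\mathbf B_{m,k}^{(\ell+1)}=\operatorname{LN}(\widetilde{\mathbf B}_{m,k}^{(\ell+1)})\). The first claim, that the update explicitly includes the previous state, should then be read off this decomposition. The argument inside \(\operatorname{LN}\) is the sum of the carried-forward term \(\mathbf B_{m,k}^{(\ell)}\) and a residual term. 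By Eq.~\eqref{eq:dim_bridge_delta}, the residual term is a function of the state feature \(\mathbf S_{m,k}^{\mathrm{dim},(\ell)}\) only through \(\operatorname{MLP}_B\), vectorization, and reshaping.

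The second step is the norm identity. I would subtract the previous state:
\[
\widetilde{\mathbf B}_{m,k}^{(\ell+1)}-\mathbf B_{m,k}^{(\ell)}=\Delta \mathbf B_{m,k}^{(\ell)}.
\]
Taking Frobenius norms then gives \(\|\widetilde{\mathbf B}_{m,k}^{(\ell+1)}-\mathbf B_{m,k}^{(\ell)}\|_F=\|\Delta\mathbf B_{m,k}^{(\ell)}\|_F\). This holds for any realization of the MLP output, because \(\operatorname{Reshape}\) makes \(\Delta\mathbf B_{m,k}^{(\ell)}\) a well-defined \(d_e\times d_e\) matrix, the same size as the bridge. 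The modality bridge is handled by the identical computation, using Eqs.~\eqref{eq:mod_bridge_delta}--\eqref{eq:mod_bridge_update} with \(\mathbf C_m^{(\ell)}\) and \(\Delta\mathbf C_m^{(\ell)}\).

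The only delicate point is the scope caveat, which I would address with a short remark. Layer normalization re-centres and rescales its input and may apply a learned affine map. The post-normalization change \(\mathbf B_{m,k}^{(\ell+1)}-\mathbf B_{m,k}^{(\ell)}\) is therefore generally not equal to \(\Delta\mathbf B_{m,k}^{(\ell)}\), and it can be nonzero even when \(\Delta\mathbf B_{m,k}^{(\ell)}=\mathbf 0\). For example, at \(\ell=0\) with \(\mathbf B^{(0)}=\mathbf I_{d_e}\), \(\operatorname{LN}(\mathbf I_{d_e})\neq\mathbf I_{d_e}\) in general. This shows that the statement concerns the residual form before normalization and does not claim lossless retention, which matches the proposition's final sentence. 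No step requires a bound or estimate, so I do not expect a real obstacle beyond stating this scope precisely.
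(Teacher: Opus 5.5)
Your proposal is correct and matches the paper's proof. Both subtract $\mathbf B_{m,k}^{(\ell)}$ from the pre-normalization sum to leave $\Delta\mathbf B_{m,k}^{(\ell)}$, take Frobenius norms, and repeat the same step for $\mathbf C_m^{(\ell)}$. Your scope remark, that $\operatorname{LN}$ can change the state even when $\Delta\mathbf B_{m,k}^{(\ell)}=\mathbf 0$, is a valid complement to the paper's caveat, which instead cites the non-injectivity of layer normalization and the possibility that the update cancels the previous state.
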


\begin{proof}
Each update equation contains an additive residual term. For example, the dimension bridge update contains
\[
\mathbf B_{m,k}^{(\ell)}
+
\Delta \mathbf B_{m,k}^{(\ell)}.
\]
Subtracting \(\mathbf B_{m,k}^{(\ell)}\) from this pre-normalization sum leaves \(\Delta\mathbf B_{m,k}^{(\ell)}\), which establishes the norm statement. The same argument applies to \(\mathbf C_m^{(\ell)}\). Layer normalization is not injective: a constant shift within a normalized group leaves its normalized values unchanged. Moreover, an update can cancel part or all of the previous state. Thus, the residual form establishes an explicit state-carrying path, but not recoverability of earlier states or an unconditional stability guarantee.
\end{proof}

This property helps explain why bridge matrices can be interpreted as evolving relationship-level representations. Each bridge is refined from its current state rather than reset at every layer. The update norms quantify the size of the pre-normalization changes and provide a useful diagnostic; information retention and stability after normalization require separate evaluation.

\begin{proposition}[Structured Fusion of Factors and Bridges]\leavevmode\par
Let
\[
\mathbf G_{m,k}
=
\mathbf U_{m,k}^{\top}\mathbf U_{m,k}
\]
denote the latent self-correlation feature of the \(k\)-th factor in modality \(m\). Let \(\widehat{\mathbf B}_{m,k}\) and \(\widehat{\mathbf C}_m\) denote the decoded dimension and modality bridges, respectively. The fusion rule
\[
\mathbf F_m
=
\mathbf G_{m,1}
\widehat{\mathbf B}_{m,1}
\mathbf G_{m,2}
\cdots
\widehat{\mathbf B}_{m,K_m-1}
\mathbf G_{m,K_m},
\]
\[
\mathbf F
=
\mathbf F_1
\widehat{\mathbf C}_1
\mathbf F_2
\cdots
\widehat{\mathbf C}_{M-1}
\mathbf F_M
\]
combines factor self-correlation features and decoded bridges in \(\mathbf F\). In the final stacked representation \(\mathcal Z\) of Eq.~\eqref{eq:final_stack}, each decoded bridge is also retained as a separate block and can be recovered by selecting that block. This recovery statement concerns the decoded bridges, not the original factors or input tensors.
\end{proposition}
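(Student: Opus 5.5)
The argument is essentially structural and has two parts, matching the two claims in the statement: that $\mathbf F$ genuinely combines both kinds of information, and that each decoded bridge is recoverable from $\mathcal Z$.

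\textbf{Well-definedness and dependence of $\mathbf F$.} First I would check dimensions. By Eq.~\eqref{eq:latent_gram}, each $\mathbf G_{m,k}=\mathbf U_{m,k}^{\top}\mathbf U_{m,k}$ lies in $\mathbb R^{r\times r}$, whatever the size $d_k^{(m)}$. By construction, every decoded bridge $\widehat{\mathbf B}_{m,k}$ and $\widehat{\mathbf C}_m$ also lies in $\mathbb R^{r\times r}$. Every factor in the products of Eqs.~\eqref{eq:modality_fusion} and \eqref{eq:cross_modality_fusion} is therefore $r\times r$, so $\mathbf F_m$ and $\mathbf F$ are well defined $r\times r$ matrices.

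Writing $\mathbf F$ out in full, it is a single ordered product. This product contains every $\mathbf G_{m,k}$ and every decoded bridge exactly once, in the order dimensions within modality $1$, then $\widehat{\mathbf C}_1$, then modality $2$, and so on. This is the precise sense of ``combines.''

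To make the claim substantive, I would note that $\mathbf F$ is multilinear in its factors. Fixing all other terms and perturbing one decoded bridge $\widehat{\mathbf B}_{m,k}\mapsto\widehat{\mathbf B}_{m,k}+\mathbf D$ changes $\mathbf F$ by $\mathbf L\mathbf D\mathbf R$, where $\mathbf L$ is the product of all terms to its left and $\mathbf R$ the product of all terms to its right. This mirrors Proposition~1. The same holds for each $\mathbf G_{m,k}$ and each $\widehat{\mathbf C}_m$. I would add the caveat that this change can vanish if $\mathbf L$ or $\mathbf R$ is singular, for instance when some $\mathbf G_{m,k}$ is rank deficient because $d_k^{(m)}<r$. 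So the claim is structural inclusion, not injectivity.

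\textbf{Recovery of decoded bridges.} By Eq.~\eqref{eq:final_stack}, $\mathcal Z\in\mathbb R^{N_z\times r\times r}$ is the stack of $\mathbf F$, the $\widehat{\mathbf B}_{m,k}$, and the $\widehat{\mathbf C}_m$ along the first mode. By Proposition~3 this gives $N_z=1+N_B=K_{\mathrm{tot}}$ slices.

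I would fix a stacking order, say $\mathbf F$ first, then the $\widehat{\mathbf B}_{m,k}$ lexicographically in $(m,k)$, then $\widehat{\mathbf C}_1,\dots,\widehat{\mathbf C}_{M-1}$. This defines an index map $\iota$ from bridges to slices. The slice-selection map $\mathcal Z\mapsto\mathcal Z[\iota(\cdot),:,:]$ is linear and returns the corresponding decoded bridge exactly. The bridges are therefore recoverable regardless of any cancellation inside $\mathbf F$.

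\textbf{Scope of the recovery claim.} I would close by explaining why the claim does not extend to factors or inputs. The map $\mathbf U\mapsto\mathbf U^{\top}\mathbf U$ is invariant under $\mathbf U\mapsto\mathbf O\mathbf U$ for orthogonal $\mathbf O$, and the factors enter $\mathcal Z$ only through $\mathbf F$, via these Gram matrices. The original factors, and hence the input tensors, cannot in general be recovered.

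The main obstacle is only one of precision: stating ``combines'' without overclaiming. The multilinear perturbation formula, together with the singularity caveat, is how I would handle it.
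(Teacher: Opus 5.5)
Your proposal is correct and follows the paper's route: $\mathbf F$ depends on both the Gram features and the decoded bridges because it is their ordered product, and each decoded bridge is recovered exactly by selecting its block of $\mathcal Z$, with no invertibility of the fusion products needed. Your dimension check and your multilinear perturbation formula $\mathbf L\mathbf D\mathbf R$, with its singularity caveat, just make the paper's informal ``depends on'' precise.

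One caution concerns your optional scope paragraph. The factors do not enter $\mathcal Z$ only through the Gram matrices. The decoded bridges come from attention whose queries are built on $\mathbf H_{m,k}$, which contains $(\mathbf P_{m,k}\mathbf U_{m,k})^{\top}$. That term is not invariant under $\mathbf U_{m,k}\mapsto\mathbf O\mathbf U_{m,k}$, so the orthogonal-invariance argument does not establish non-recoverability. However, the statement only limits what the recovery claim covers and asserts nothing about the factors, so you can simply drop that remark.
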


\begin{proof}
Each \(\mathbf G_{m,k}\) is constructed from the latent factor \(\mathbf U_{m,k}\), and therefore encodes factor-level information in the shared latent space. Each decoded dimension bridge \(\widehat{\mathbf B}_{m,k}\) is assigned to the connection between adjacent tensor factors within modality \(m\), while each decoded modality bridge \(\widehat{\mathbf C}_m\) is assigned to the connection between adjacent modalities.

The fused representation \(\mathbf F_m\) is obtained by composing factor-level self-correlation features with decoded dimension bridges. Therefore, \(\mathbf F_m\) depends on both the latent factors and the learned within-modality relationships. Similarly, \(\mathbf F\) is obtained by composing modality-level representations with decoded modality bridges, and therefore depends on both modality-level features and cross-modality relationships. The stack in Eq.~\eqref{eq:final_stack} appends the decoded bridges themselves to \(\mathbf F\), so selecting their designated blocks recovers them exactly. No invertibility of the fusion products is needed for this block-selection property.
\end{proof}


\begin{proposition}[Shape Invariance of the Final Latent Representation]\leavevmode\par
For each dimension specific factor \(\mathbf U_{m,k}\in\mathbb R^{d_k^{(m)}\times r}\), the latent self-correlation feature
\[
\mathbf G_{m,k}
=
\mathbf U_{m,k}^{\top}\mathbf U_{m,k}
\]
has size \(r\times r\), independent of the original tensor dimension size \(d_k^{(m)}\). If all decoded bridges are also represented in \(\mathbb R^{r\times r}\), then for fixed \(M\), \(\{K_m\}\), and \(r\), the shapes of \(\mathbf F\) and \(\mathcal Z\) are independent of the original tensor dimension sizes. In particular, \(\mathbf F\in\mathbb R^{r\times r}\) and \(\mathcal Z\in\mathbb R^{(N_B+1)\times r\times r}\) for the stack in Eq.~\eqref{eq:final_stack}.
\end{proposition}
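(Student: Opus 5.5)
The argument is a direct shape count, in three steps: the self-correlation features, then the two levels of chained products in Eqs.~\eqref{eq:modality_fusion} and \eqref{eq:cross_modality_fusion}, and finally the stack in Eq.~\eqref{eq:final_stack}.

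\emph{Step 1 (self-correlation features).} Since \(\mathbf U_{m,k}\in\mathbb R^{d_k^{(m)}\times r}\) by Eq.~\eqref{eq:dmtd}, its transpose lies in \(\mathbb R^{r\times d_k^{(m)}}\). The inner dimension \(d_k^{(m)}\) is contracted in the product, so \(\mathbf G_{m,k}=\mathbf U_{m,k}^{\top}\mathbf U_{m,k}\in\mathbb R^{r\times r}\). The size \(d_k^{(m)}\) enters only as the summation length of each entry \((\mathbf G_{m,k})_{ij}=\sum_{t=1}^{d_k^{(m)}}(\mathbf U_{m,k})_{ti}(\mathbf U_{m,k})_{tj}\), never as an index range of the output.

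\emph{Step 2 (the chained products).} Under the hypothesis that every \(\widehat{\mathbf B}_{m,k}\) and \(\widehat{\mathbf C}_m\) lies in \(\mathbb R^{r\times r}\), I would argue by induction on the number of factors in each chain.
\begin{itemize}
\item Every factor in the chain for \(\mathbf F_m\) in Eq.~\eqref{eq:modality_fusion} is \(r\times r\).
\item A product of two \(r\times r\) matrices is again \(r\times r\).
\item Hence every partial product, and therefore \(\mathbf F_m\), is in \(\mathbb R^{r\times r}\). When \(K_m=1\) the product reduces to \(\mathbf F_m=\mathbf G_{m,1}\), which is also \(r\times r\).
\end{itemize}
The same induction applied to Eq.~\eqref{eq:cross_modality_fusion} gives \(\mathbf F\in\mathbb R^{r\times r}\), with \(\mathbf F=\mathbf F_1\) when \(M=1\). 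None of these shapes depends on any \(d_k^{(m)}\).

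\emph{Step 3 (counting the stack).} The stack contains one fused block \(\mathbf F\), the \(\sum_m(K_m-1)\) decoded dimension bridges, and the \(M-1\) decoded modality bridges. By Proposition 3 the number of bridges is \(N_B=K_{\mathrm{tot}}-1\), so \(N_z=N_B+1=K_{\mathrm{tot}}\). Every block is \(r\times r\), so \(\mathcal Z\in\mathbb R^{(N_B+1)\times r\times r}\). This shape is determined by \(M\), \(\{K_m\}\) and \(r\) alone.

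The only step needing care is the hypothesis on the decoders. The statement assumes, rather than derives, that the decoded bridges live in \(\mathbb R^{r\times r}\). The undecoded bridges are \(d_e\times d_e\), and \(d_e\) is fixed independently of the data sizes by Eq.~\eqref{eq:embedding}. I would also remark that the learnable maps \(\mathbf W_{m,k}\) and \(\mathbf P_{m,k}\) do depend on \(d_k^{(m)}\), so the invariance holds for output shapes only, not for parameter counts.
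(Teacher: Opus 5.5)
Your proposal is correct and follows the same route as the paper's proof. You contract the inner dimension $d_k^{(m)}$ in $\mathbf U_{m,k}^{\top}\mathbf U_{m,k}$, note that under the decoder hypothesis every factor in the fusion chains is $r\times r$, and conclude the shapes of $\mathbf F$ and $\mathcal Z$; your explicit induction (including the $K_m=1$ and $M=1$ cases) and your use of Proposition~3 to count $N_B+1=K_{\mathrm{tot}}$ stacked blocks only spell out what the paper leaves implicit when it says the shapes depend on $r$ and the number of stacked matrices.
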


\begin{proof}
Since \(\mathbf U_{m,k}\in\mathbb R^{d_k^{(m)}\times r}\), its transpose satisfies \(\mathbf U_{m,k}^{\top}\in\mathbb R^{r\times d_k^{(m)}}\). Therefore,
\[
\mathbf G_{m,k}
=
\mathbf U_{m,k}^{\top}\mathbf U_{m,k}
\in
\mathbb R^{r\times r}.
\]
The original dimension size \(d_k^{(m)}\) does not appear in the output shape of this matrix multiplication. If each decoded bridge matrix \(\widehat{\mathbf B}_{m,k}\) and \(\widehat{\mathbf C}_m\) also lies in \(\mathbb R^{r\times r}\), then all terms in the fusion products are \(r\times r\) matrices. Therefore, the shapes of \(\mathbf F_m\), \(\mathbf F\), and \(\mathcal Z\) depend on \(r\) and the number of stacked matrices, but not on \(d_k^{(m)}\). Their numerical values can still depend on the input tensors and their dimensions.
\end{proof}

This property is useful for heterogeneous multimodal tensor data. Different modalities may have different dimension sizes, but after dimension-specific representation and latent self-correlation construction, all modality representations are mapped into a common latent matrix space. As a result, the downstream prediction head can operate on a consistent representation format even when the input modalities have different tensor shapes. This is an output-format property, not a guarantee that one trained model accepts arbitrary new input shapes: the input-dependent projection dimensions must still be compatible.

Together, these properties describe the structural rationale of RTR. Bridges provide indexed relational states distinct from factor variables; the specified sparse topology bounds their count and state-storage requirements; and the update and fusion rules specify how those states are carried forward and made available to the prediction head. The resulting trade-offs concern connectivity, representation width, update size, and the information exposed at readout. Their effects on predictive performance and the meaning of the learned relationships require empirical assessment. This distinction keeps architectural guarantees separate from model insights to be investigated in simulations and real-world cases.

\section{Simulation Study}
\label{sec:simulation}
This section evaluates the proposed RTR through simulation studies under four scenarios. All scenarios use a common family of latent class geometries while varying the contributions of structured and modality-specific variation. The objective is to examine the predictive behavior of the full model under these different mixtures and relate the observations to the representation roles described in Section~\ref{sec:theory}. These comparisons do not assume identical classification difficulty in the observed tensor space.

\subsection{Simulation Objective} 
We construct a synthetic binary classification problem with two third-order tensor modalities. For each sample $i$ $(i=1,\ldots,N)$, the observed data are $(\mathcal{X}_i^{(1)},  \mathcal{X}_i^{(2)})$ with a class label $y_i \in \{0,1\}$. For each simulation case, we generate \(N=3000\) samples with $\mathcal{X}_i^{(1)} \in \mathbb R^{20\times15\times10}$ and $\mathcal{X}_i^{(2)} \in \mathbb R^{20\times15\times10}$ under balanced class proportions. The data-generating process is governed by two structural parameters: $\alpha\in[0,1]$, which controls the strength of within-modality cross-dimensional dependency, and $\beta\in[0,1]$, which controls the strength of cross-modality dependency. Here, \(\alpha\) and \(\beta\) are generation parameters, rather than measured dependence coefficients of the final observed tensors. The four scenarios retain the same type of latent class geometry while changing how its signal is expressed in the observations. This design addresses two questions: whether RTR remains competitive across the prescribed within- and cross-modality mixtures, and whether the relative performance of the comparison methods changes with the scenario. 

\subsection{Multimodal Tensor Generation} 
\label{sec:simulation_generation}
To generate each sample $(\mathcal{X}_i^{(1)}, \mathcal{X}_i^{(2)})$, we first generate a shared latent representation
\(\mathbf z_i^{\mathrm{shared}}\in\mathbb R^r\) from a Taiji-like two-moon geometry. The two classes correspond to two nonlinear latent manifolds, so that \(\mathbf z_i^{\mathrm{shared}}\) serves as the common class-discriminative source for both modalities. We set $r=5$: the first two coordinates follow the two-moon geometry with Gaussian jitter of standard deviation $0.08$, coordinatewise centering and unit-variance scaling, and a final multiplier $\delta=0.8$; the remaining coordinates are independent $N(0,0.35^2)$ variables ($\tau=1$). To control cross-modality dependency, each modality combines the shared latent representation with an independent private perturbation:
$\mathbf z_i^{(m)}=\beta \mathbf z_i^{\mathrm{shared}}+\sqrt{1-\beta^2}\,
\mathbf z_{i,\mathrm{private}}^{(m)}$, where $\mathbf z_{i,\mathrm{private}}^{(m)}\sim N(\mathbf0,\mathbf I_r)$. The parameter \(\beta\) controls the cross-modality dependency and a smaller \(\beta\) increases the contribution of modality-specific variation. The structured core places $\mathbf z_i^{(m)}$ on its diagonal and adds the rank-one term $(0.15/\sqrt r)\tanh(\mathbf z_i^{(m)})^{\otimes3}$. For the weak core, entry $(a,b,c)$ is $(u_{1,a}+u_{2,b}+u_{3,c})/\sqrt{3r}+0.7\xi_{abc}$, where $\mathbf u_k=\mathbf R_{m,k}^{\top}\mathbf z_i^{(m)}$ and $\xi_{abc}$ are independent $N(0,1)$ variables; sample and modality indices are suppressed here for brevity. Each modality-specific latent core is subsequently embedded into a third-order tensor. The complete tensor-generation mechanism is written as
\begin{equation}
\begin{aligned}
\mathcal Y_i^{(m)}
&=\big\{\alpha\mathcal G_{\mathrm{structured},i}^{(m)}
+(1-\alpha)\mathcal G_{\mathrm{weak},i}^{(m)}\big\}
\times_1\mathbf A_{m,1}\times_2\mathbf A_{m,2}\times_3\mathbf A_{m,3},\\
\mathcal X_i^{(m)}
&=\Big[\operatorname{Normalize}\!\big(
\operatorname{Normalize}(\mathcal Y^{(m)})
+\lambda\operatorname{Normalize}(\mathcal C^{(m)})\big)\Big]_i
+\sigma\mathcal E_i^{(m)},
\end{aligned}
\label{eq:simulation_observed_tensor}
\end{equation}
where \(\alpha\) controls the relative contributions of the structured and weak core components. \(\mathbf A_{m,1}\in\mathbb R^{20\times r}\), \(\mathbf A_{m,2}\in\mathbb R^{15\times r}\), and \(\mathbf A_{m,3}\in\mathbb R^{10\times r}\) map the latent core to the three observed tensor dimensions, where \(\times_k\) denotes the tensor--matrix product along dimension \(k\). The $\mathbf A_{m,k}$ and $\mathbf R_{m,k}\in\mathbb R^{r\times r}$ matrices have independent standard Gaussian entries before columnwise unit-$\ell_2$ normalization and are fixed across samples within each scenario. In Eq.~\eqref{eq:simulation_observed_tensor}, $\mathcal Y^{(m)}$ and $\mathcal C^{(m)}$ stack all samples, $[\cdot]_i$ selects sample $i$, and $\operatorname{Normalize}(\mathcal T)=\mathcal T/\max\{\operatorname{sd}_{\mathrm{all}}(\mathcal T),10^{-6}\}$ divides by the population standard deviation over that modality's full stack without centering. The carrier tensor \(\mathcal C_i^{(m)}\in\mathbb R^{20\times15\times10}\) provides an additional smooth expression of the shared nonlinear latent geometry. The two modalities use slightly different carrier patterns so that they remain related but not identical. The tensor \(\mathcal E_i^{(m)}\in\mathbb R^{20\times15\times10}\) contains independent standard Gaussian noise, and \(\lambda\) and \(\sigma\) control the carrier strength and noise level, respectively. All scenarios use $\lambda=20$ and $\sigma=0.05$; the combined signal is rescaled before observation noise is added. Separately, classifier inputs are standardized coordinatewise using only training-set means and sample standard deviations (floored at $10^{-6}$), with the same transformation applied to validation and test data.

\subsection{Structural Dependency Scenarios}
\label{sec:simulation_scenarios}
We consider four simulation scenarios by combining weak and strong levels of cross-dimensional and cross-modality dependencies, controlled by \(\alpha\) and \(\beta\), as summarized in Table~\ref{tab:simulation_scenarios}. Scenario 1 represents weak dependencies at both levels (\(\alpha=0.2\), \(\beta=0.2\)), whereas Scenario 2 increases the cross-dimensional dependency while maintaining weak cross-modality dependency (\(\alpha=0.8\), \(\beta=0.2\)). Scenario 3 considers the opposite setting, with weak cross-dimensional but strong cross-modality dependency (\(\alpha=0.2\), \(\beta=0.8\)). Finally, Scenario 4 represents strong dependencies at both levels (\(\alpha=0.8\), \(\beta=0.8\)). The weak and strong labels identify the prescribed low and high parameter settings. They do not imply the absence or presence of all shared information: the carrier term in Eq.~\eqref{eq:simulation_observed_tensor} also expresses the shared latent geometry. Thus, a low value of \(\beta\) should not be interpreted as making the two observed modalities independent.
\begin{table}[htbp]
\centering
\caption{Simulation scenarios under different levels of within-modality and cross-modality dependency.}
\label{tab:simulation_scenarios}
\small
\setlength{\tabcolsep}{6pt}
\begin{tabular}{lcc}
\toprule
Scenarios
& \begin{tabular}{c} Cross-dimensional \\ dependency \(\alpha\) \end{tabular}
& \begin{tabular}{c} Cross-modality \\ dependency \(\beta\) \end{tabular} \\
\midrule
Scenario 1: Weak -- Weak      & 0.2 & 0.2 \\
Scenario 2: Strong -- Weak    & 0.8 & 0.2 \\
Scenario 3: Weak -- Strong    & 0.2 & 0.8 \\
Scenario 4: Strong -- Strong  & 0.8 & 0.8 \\
\bottomrule
\end{tabular}
\end{table}

Figure~\ref{fig:simulation_case_difference} illustrates the intended distinction among the four scenarios, with one column per scenario. The upper row depicts the dimension-level setting associated with \(\alpha\), and the lower row depicts the modality-level setting associated with \(\beta\). Blue and red point clouds distinguish two tensor dimensions; purple and orange clouds distinguish the two modalities. Gray lines connect corresponding observations. The separated and overlapping clouds provide a visual guide to the weak and strong design labels. In particular, Scenarios 2 and 3 contrast the two levels, whereas Scenarios 1 and 4 use matching levels. The figure is an illustration of the scenario design, not a quantitative dependence estimate or an evaluation of the relational states learned by RTR.
\begin{figure}[htbp]
\centering
\includegraphics[width=\textwidth]{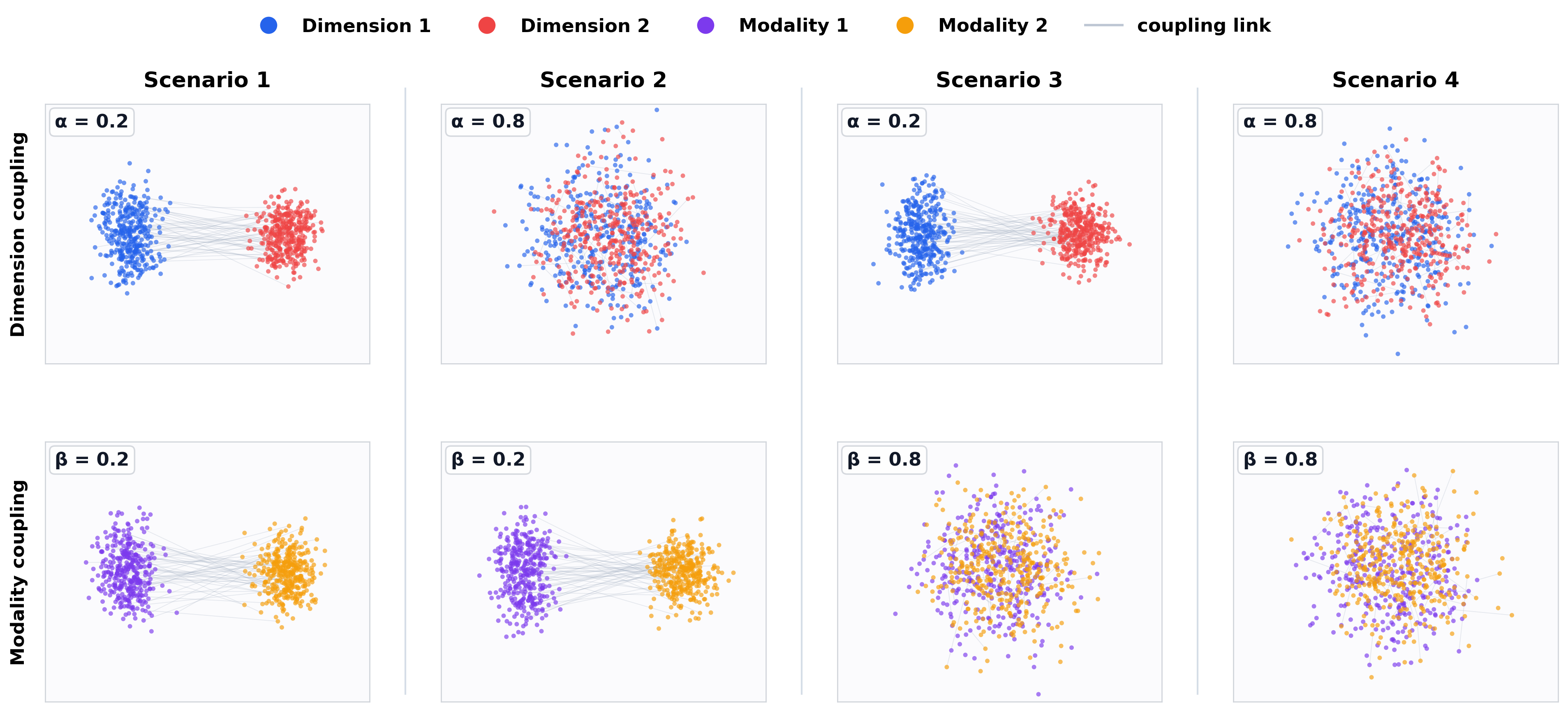}
\caption{Illustrative dependency patterns under the four simulation scenarios.} 
\label{fig:simulation_case_difference}
\end{figure}

\subsection{Simulation Results}
\label{sec:simulation_downstream}
For each simulation scenario, the data are randomly divided into $70\%$ training, $10\%$ validation, and $20\%$ testing sets, and each experiment is repeated four times with different random seeds. All compared methods use the same protocol. Performance is reported as mean \(\pm\) standard deviation. The compared methods include Support Vector Machine (SVM) \citep{cervantes2020comprehensive}, Multilayer Perceptron (MLP) \citep{liu2022we}, Tensor Fusion \citep{zadeh2017tensor}, Coupled Matrix and Tensor Factorization (CMTF) \citep{acar2011all}, Transformer \citep{vaswani2017attention}, Multimodal Transformer \citep{xu2023multimodal}, and Coupled Mamba \citep{li2024coupled}. SVM and MLP provide a conventional vectorized classification baseline. Modality1 MLP and Modality2 MLP evaluate the discriminative information contained in each individual tensor modality, whereas Tensor Fusion combines the two modalities for classification. CMTF represents a tensor factorization-based multimodal learning method. Transformer and Multimodal Transformer provide attention-based baselines, with the latter explicitly modeling interactions across modalities. Coupled Mamba provides a state-space-model-based multimodal baseline for capturing long-range dependencies and exchanging information between modalities. Together, these baselines cover vectorized classification, single-modality learning, feature-level multimodal fusion, tensor-based multimodal fusion, attention-based multimodal fusion, and state-space modeling.  
\begin{table*}[htbp]
\centering
\caption{Simulation performance under four scenarios. Results are reported as mean $\pm$ standard deviation over four independent repeats. Bold values indicate the best performance in each setting.}
\label{tab:simulation_results}

\scriptsize
\setlength{\tabcolsep}{3pt}
\renewcommand{\arraystretch}{1.08}

\resizebox{\textwidth}{!}{
\begin{tabular}{lcccccccc}
\toprule

\multicolumn{9}{l}{
\textbf{Panel A: Weak cross-modality dependency ($\beta=0.2$)}
}
\\[2pt]

\multirow{2}{*}{Method}
&
\multicolumn{4}{c}{Scenario 1: Weak dimension ($\alpha=0.2$)}
&
\multicolumn{4}{c}{Scenario 2: Strong dimension ($\alpha=0.8$)}
\\
\cmidrule(lr){2-5}
\cmidrule(lr){6-9}

&
Acc. & Precision & Recall & Macro-F1
&
Acc. & Precision & Recall & Macro-F1
\\
\midrule

SVM
& \(0.8653 \pm 0.0252\) & \(0.8655 \pm 0.0252\) & \(0.8653 \pm 0.0252\) & \(0.8653 \pm 0.0252\)
& \(0.8670 \pm 0.0406\) & \(0.8671 \pm 0.0406\) & \(0.8670 \pm 0.0406\) & \(0.8669 \pm 0.0406\)
\\

Modality1 MLP
& \(0.9069 \pm 0.0171\) & \(0.9073 \pm 0.0176\) & \(0.9069 \pm 0.0171\) & \(0.9069 \pm 0.0171\)
& \(0.8980 \pm 0.0248\) & \(0.8981 \pm 0.0247\) & \(0.8980 \pm 0.0248\) & \(0.8980 \pm 0.0248\)
\\

Modality2 MLP
& \(0.9080 \pm 0.0143\) & \(0.9084 \pm 0.0146\) & \(0.9080 \pm 0.0143\) & \(0.9080 \pm 0.0143\)
& \(0.8952 \pm 0.0289\) & \(0.8955 \pm 0.0287\) & \(0.8952 \pm 0.0289\) & \(0.8952 \pm 0.0289\)
\\

Tensor Fusion
& \(0.9030 \pm 0.0158\) & \(0.9032 \pm 0.0160\) & \(0.9030 \pm 0.0158\) & \(0.9030 \pm 0.0158\)
& \(0.8958 \pm 0.0311\) & \(0.8960 \pm 0.0310\) & \(0.8958 \pm 0.0311\) & \(0.8958 \pm 0.0311\)
\\

CMTF
& \(0.9113 \pm 0.0272\) & \(0.9115 \pm 0.0274\) & \(0.9113 \pm 0.0272\) & \(0.9113 \pm 0.0272\)
& \(0.8975 \pm 0.0265\) & \(0.8977 \pm 0.0264\) & \(0.8975 \pm 0.0265\) & \(0.8974 \pm 0.0265\)
\\

Transformer
& \(0.9063 \pm 0.0114\) & \(0.9067 \pm 0.0118\) & \(0.9063 \pm 0.0114\) & \(0.9063 \pm 0.0113\)
& \(0.8941 \pm 0.0283\) & \(0.8943 \pm 0.0280\) & \(0.8941 \pm 0.0283\) & \(0.8941 \pm 0.0283\)
\\

Multimodal Transformer
& \(0.9390 \pm 0.0224\) & \(0.9391 \pm 0.0225\) & \(0.9390 \pm 0.0224\) & \(0.9390 \pm 0.0224\)
& \(0.9296 \pm 0.0387\) & \(0.9300 \pm 0.0388\) & \(0.9296 \pm 0.0387\) & \(0.9296 \pm 0.0387\)
\\

Coupled Mamba
& \(0.9446 \pm 0.0127\) & \(0.9449 \pm 0.0130\) & \(0.9446 \pm 0.0127\) & \(0.9446 \pm 0.0127\)
& \(0.9346 \pm 0.0243\) & \(0.9350 \pm 0.0238\) & \(0.9346 \pm 0.0243\) & \(0.9346 \pm 0.0244\)
\\

\midrule
RTR
& \(\mathbf{0.9979 \pm 0.0013}\) & \(\mathbf{0.9979 \pm 0.0013}\)
& \(\mathbf{0.9979 \pm 0.0013}\) & \(\mathbf{0.9979 \pm 0.0013}\)
& \(\mathbf{0.9889 \pm 0.0101}\) & \(\mathbf{0.9882 \pm 0.0104}\)
& \(\mathbf{0.9871 \pm 0.0102}\) & \(\mathbf{0.9890 \pm 0.0100}\)
\\

\addlinespace[5pt]
\midrule
\addlinespace[2pt]

\multicolumn{9}{l}{
\textbf{Panel B: Strong cross-modality dependency ($\beta=0.8$)}
}
\\[2pt]

\multirow{2}{*}{Method}
&
\multicolumn{4}{c}{Scenario 3: Weak dimension ($\alpha=0.2$)}
&
\multicolumn{4}{c}{Scenario 4: Strong dimension ($\alpha=0.8$)}
\\
\cmidrule(lr){2-5}
\cmidrule(lr){6-9}

&
Acc. & Precision & Recall & Macro-F1
&
Acc. & Precision & Recall & Macro-F1
\\
\midrule

SVM
& \(0.8697 \pm 0.0064\) & \(0.8701 \pm 0.0062\) & \(0.8697 \pm 0.0063\) & \(0.8697 \pm 0.0064\)
& \(0.8548 \pm 0.0271\) & \(0.8552 \pm 0.0270\) & \(0.8547 \pm 0.0271\) & \(0.8547 \pm 0.0272\)
\\

Modality1 MLP
& \(0.8975 \pm 0.0105\) & \(0.8978 \pm 0.0106\) & \(0.8974 \pm 0.0104\) & \(0.8974 \pm 0.0104\)
& \(0.8747 \pm 0.0246\) & \(0.8752 \pm 0.0248\) & \(0.8747 \pm 0.0246\) & \(0.8747 \pm 0.0246\)
\\

Modality2 MLP
& \(0.8969 \pm 0.0090\) & \(0.8971 \pm 0.0089\) & \(0.8969 \pm 0.0089\) & \(0.8969 \pm 0.0090\)
& \(0.8758 \pm 0.0207\) & \(0.8764 \pm 0.0208\) & \(0.8758 \pm 0.0207\) & \(0.8758 \pm 0.0207\)
\\

Tensor Fusion
& \(0.8975 \pm 0.0098\) & \(0.8979 \pm 0.0096\) & \(0.8974 \pm 0.0098\) & \(0.8974 \pm 0.0098\)
& \(0.8742 \pm 0.0269\) & \(0.8744 \pm 0.0271\) & \(0.8742 \pm 0.0269\) & \(0.8742 \pm 0.0269\)
\\

CMTF
& \(0.8980 \pm 0.0124\) & \(0.8981 \pm 0.0125\) & \(0.8980 \pm 0.0124\) & \(0.8980 \pm 0.0124\)
& \(0.8936 \pm 0.0316\) & \(0.8937 \pm 0.0316\) & \(0.8936 \pm 0.0315\) & \(0.8936 \pm 0.0316\)
\\

Transformer
& \(0.8980 \pm 0.0110\) & \(0.8983 \pm 0.0112\) & \(0.8980 \pm 0.0110\) & \(0.8980 \pm 0.0110\)
& \(0.8731 \pm 0.0217\) & \(0.8733 \pm 0.0219\) & \(0.8730 \pm 0.0217\) & \(0.8730 \pm 0.0217\)
\\

Multimodal Transformer
& \(0.9568 \pm 0.0117\) & \(0.9573 \pm 0.0121\) & \(0.9568 \pm 0.0117\) & \(0.9568 \pm 0.0117\)
& \(0.9279 \pm 0.0131\) & \(0.9281 \pm 0.0132\) & \(0.9279 \pm 0.0131\) & \(0.9279 \pm 0.0131\)
\\

Coupled Mamba
& \(0.9385 \pm 0.0202\) & \(0.9390 \pm 0.0199\) & \(0.9385 \pm 0.0202\) & \(0.9385 \pm 0.0202\)
& \(0.9252 \pm 0.0186\) & \(0.9256 \pm 0.0182\) & \(0.9251 \pm 0.0186\) & \(0.9251 \pm 0.0186\)
\\

\midrule
RTR
& \(\mathbf{0.9856 \pm 0.0144}\) & \(\mathbf{0.9851 \pm 0.0121}\)
& \(\mathbf{0.9855 \pm 0.0101}\) & \(\mathbf{0.9855 \pm 0.0101}\)
& \(\mathbf{0.9917 \pm 0.0053}\) & \(\mathbf{0.9915 \pm 0.0058}\)
& \(\mathbf{0.9911 \pm 0.0058}\) & \(\mathbf{0.9912 \pm 0.0052}\)
\\

\bottomrule
\end{tabular}
}
\end{table*}

Table~\ref{tab:simulation_results} reports the classification performance across all four scenarios. RTR has the highest reported mean on each evaluation metric in each scenario. Agreement in the ranking across accuracy, precision, recall, and macro-F1 indicates that the comparison is not driven by a single reported metric. In scenario 1, RTR achieves an accuracy and macro-F1 of \(0.9979\), compared with \(0.9446\) for Coupled Mamba, the strongest baseline in this case. This result shows strong reported classification performance in the joint low-parameter setting, which still includes the shared carrier signal. Scenario 2 strengthens the within-modality cross-dimensional dependency while retaining weak cross-modality dependency. RTR achieves \(0.9889\) accuracy and \(0.9890\) macro-F1, whereas Coupled Mamba achieves \(0.9346\) on both metrics. The comparison favors RTR within this setting. However, its mean accuracy is lower than in Scenario 1, so the result does not support a monotonic benefit from increasing the structured-core weight. Scenario 3 considers weak cross-dimensional dependency and strong cross-modality dependency. Multimodal Transformer becomes the strongest baseline, achieving \(0.9568\) accuracy and macro-F1. The change in the leading baseline illustrates that the relative performance of the comparison methods is scenario-dependent. Nevertheless, RTR achieves \(0.9856\) accuracy and \(0.9855\) macro-F1, which is consistent with the usefulness of jointly processing dimension-level and modality-level information in this setting. Scenario 4 contains strong cross-dimensional and cross-modality dependencies. RTR achieves the best accuracy and macro-F1, compared with \(0.9279\) for Multimodal Transformer, the strongest baseline in this case. Together with Scenario 3, this result places the full RTR model first in both settings of Panel B.

The baseline results offer two further observations. First, SVM achieves approximately \(0.85\)--\(0.87\) accuracy and the single-modality MLPs perform better, showing that each modality contains predictive information. The stronger multimodal methods improve on these single-modality models in the reported comparisons. This is evidence about the evaluated pipelines, not proof that either modality alone lacks the information needed for classification. Second, the leading baseline changes from Coupled Mamba in Scenarios 1 and 2 to Multimodal Transformer in Scenarios 3 and 4. Tensor Fusion, CMTF, and the standard Transformer remain below the leading multimodal methods in these settings. The ranking change makes the comparison across scenarios informative: a method that performs best among the baselines under one mixture need not retain that position under another. However, our proposed RTR retains the highest reported mean across all four settings.

\textbf{Simulations and Propositions.} Propositions 1 and 2 distinguish factor representations from the bridge states assigned to their connections, while Proposition 7 describes how factor features and decoded bridges enter the final representation. The four scenarios give task-level context to these design choices by evaluating the full model under different within- and cross-modality signal mixtures. The reported performance is consistent with the usefulness of the combined representation, but the comparisons do not isolate the contribution of a particular bridge or establish that its entries recover the generating relationships. The fixed modality count and tensor orders place all four scenarios in the same structural-size regime of Propositions 3, 4, and 8. This study therefore examines prediction across signal mixtures, rather than scaling with the number of bridges, state size, or input shape. Likewise, the initialization and residual-update properties in Propositions 5 and 6 are not tested directly by classification metrics. Their algebraic statements and the predictive comparisons address different questions.

The results favor the full RTR pipeline across the four prescribed settings, while also showing that higher generation-parameter values do not uniformly yield higher accuracy. The relevant practical lesson is to assess a fusion method across contrasting signal mixtures rather than infer its utility from a single setting. The reported means and standard deviations describe performance and variation across the four repeats; they are not a test of statistical significance. The evidence supports a comparison of complete predictive pipelines within these scenarios, rather than a causal attribution of the performance differences to individual architectural components.

\section{Case Study}
\label{sec:real_world_experiments}
This section evaluates the proposed RTR framework on two real-world multimodal learning tasks: emotion recognition using the SEED-IV dataset \citep{zheng2019emotionmeter} and brain-activity classification using a simultaneously collected EEG-fMRI dataset \citep{zhao2023robust}. The two cases represent different data regimes in terms of sample size and modality structure. SEED-IV provides a relatively large sample set comprising two physiological modalities with related temporal structures, whereas the EEG-fMRI dataset contains fewer samples and combines electrophysiological signals with functional imaging measurements that exhibit different structural characteristics. Therefore, evaluating both datasets allows us to examine whether RTR can generalize across different data scales and degrees of modality heterogeneity.

\subsection{Experimental Setup}
\label{sec:case_experimental_setup}

\subsubsection{Datasets and baselines.}
The first case considers four-class emotion recognition using the SEED-IV dataset, which contains EEG and eye-movement modalities and provides a relatively large-sample setting. We use the officially preprocessed smooth EEG and smooth eye-movement data. Each trial is divided into non-overlapping temporal sequences using a sequence length and stride of 5, such that each constructed sample consists of five consecutive time windows without overlap. The resulting EEG input has dimensions \(62\times 5\times 20\), corresponding to 62 EEG channels, sequence length of 5, and 20 concatenated EEG features. The eye-movement input has dimensions \(31\times 5\), representing 31 eye-movement features over the same five temporal windows. The second case is binary brain-activity classification using a simultaneously acquired EEG-fMRI dataset. Each trial contains an EEG tensor \(\mathcal X_n\in\mathbb R^{16\times 34\times 121}\), an fMRI matrix \(\mathbf Y_n\in\mathbb R^{16\times 197}\), and a binary label \(L_n\in\{1,2\}\) indicating standard or target stimulus. The dataset contains 120 trials in total, with 60 trials from each class. Compared with SEED-IV, this dataset provides a small-sample and more heterogeneous multimodal setting. For both cases, the samples are randomly divided into $70\%$ training, $10\%$ validation, and $20\%$ testing sets. The validation set is used for model selection, and predictive performance is evaluated on the held-out test set. We compare RTR with the same set of baseline methods used in the simulation study, including SVM, MLP, Tensor Fusion, CMTF, Transformer, Multimodal Transformer, and Coupled Mamba. The same data partitions and evaluation protocol are used for all competing methods to ensure a consistent comparison.

\subsubsection{Implementation and evaluation.}
For RTR, the token dimension, sequence length, and attention dimension are set to 16, 8, and 32, respectively, with 4 attention heads and 2 transformer blocks. The model is trained using AdamW with a learning rate of \(5\times10^{-4}\), weight decay of \(5\times10^{-4}\), and dropout rate of 0.1. The batch size is set to 128 for SEED-IV and 8 for EEG-fMRI, reflecting the substantially smaller sample size and high-dimensional multimodal inputs of the EEG-fMRI dataset. Model performance is evaluated using accuracy, precision, recall, and macro-F1. Accuracy measures overall predictive correctness, while precision and recall measure reliability and sensitivity, respectively. Macro-F1 averages class-specific F1 scores and provides a class-balanced assessment of predictive performance. Together, these metrics provide a comprehensive assessment of both overall classification accuracy and performance consistency across classes.

\subsection{Results Analysis}
\label{sec:case_result_analysis}
Table~\ref{tab:main_results} reports the predictive performance of all competing methods on SEED-IV and EEG-fMRI. RTR achieves the highest mean performance across all evaluation metrics on both datasets. Specifically, on SEED-IV, RTR achieves \(98.78\%\) accuracy and \(98.61\%\) macro-F1. The strongest competing baseline is Coupled Mamba, which achieves \(98.31\%\) accuracy and \(97.92\%\) macro-F1. This corresponds to improvements of 0.47 percentage points in accuracy and 0.69 percentage points in macro-F1. Multimodal Transformer also performs competitively, with a macro-F1 of \(97.24\%\), which is consistent with the value of explicitly modeling interactions between modalities. Since these competing models already exchange information across modalities, the distinction of RTR lies in how it represents their connections. RTR maintains an explicit bridge state for each prescribed dimension or modality connection, separate from the factors it links (Propositions 1 and 2). This preserves a distinction between the representation of a tensor dimension and the representation of its relationship to another dimension or modality, making these structural roles separately examinable.

The advantage of RTR is more pronounced on EEG-fMRI. It achieves \(94.44\%\) accuracy and \(94.42\%\) macro-F1, whereas CMTF, the strongest baseline on this dataset, achieves \(92.71\%\) accuracy and \(92.68\%\) macro-F1. The corresponding improvements are \(1.73\) and \(1.74\)  percentage points, respectively. RTR also achieves higher precision and recall, indicating that its performance gain is consistent across different evaluation criteria. Notably, CMTF performs strongly in this small-sample setting, suggesting that tensor-structured representations provide a useful inductive bias when training data are limited. In comparison, the standard Transformer exhibits lower mean performance and greater variability, while Multimodal Transformer and Coupled Mamba provide improvements but remain below CMTF and RTR. The change in the leading baseline, from Coupled Mamba on SEED-IV to CMTF here, gives context to RTR's combination of tensor structure and relational learning. RTR retains dimension-specific factors and updates their connection states separately, jointly preserving dimension-level structure and modeling cross-modal relationships. Across both datasets, the proposed RTR achieves the highest reported mean on all four classification metrics, supporting the effectiveness of its combined tensor-structured and relational representation.
\begin{table*}[htbp]
\centering
\caption{Classification performance on SEED-IV and EEG-fMRI. Results are reported as mean \(\pm\) standard deviation. Bold values indicate the best performance for each metric.}
\label{tab:main_results}
\scriptsize
\setlength{\tabcolsep}{3pt}
\resizebox{\textwidth}{!}{
\begin{tabular}{lcccccccc}
\toprule
\multirow{2}{*}{Method}
&
\multicolumn{4}{c}{SEED-IV}
&
\multicolumn{4}{c}{EEG-fMRI}
\\
\cmidrule(lr){2-5}
\cmidrule(lr){6-9}
&
Acc. & Precision & Recall & Macro-F1
&
Acc. & Precision & Recall & Macro-F1
\\
\midrule
SVM
& \(84.37 \pm 0.79\) & \(84.95 \pm 0.73\) & \(84.18 \pm 0.72\) & \(84.36 \pm 0.73\)
& \(85.42 \pm 5.38\) & \(87.57 \pm 3.91\) & \(85.42 \pm 5.38\) & \(85.13 \pm 5.73\)
\\
MLP
& \(95.35 \pm 1.73\) & \(95.55 \pm 1.40\) & \(95.18 \pm 1.88\) & \(95.30 \pm 1.74\)
& \(88.54 \pm 2.08\) & \(88.74 \pm 1.95\) & \(88.54 \pm 2.08\) & \(88.53 \pm 2.09\)
\\
Tensor Fusion
& \(95.50 \pm 0.29\) & \(95.50 \pm 0.27\) & \(95.45 \pm 0.34\) & \(95.48 \pm 0.31\)
& \(88.54 \pm 7.12\) & \(90.03 \pm 6.46\) & \(88.54 \pm 7.12\) & \(88.39 \pm 7.26\)
\\
CMTF
& \(96.49 \pm 1.15\) & \(96.47 \pm 1.17\) & \(96.39 \pm 1.16\) & \(96.42 \pm 1.18\)
& \(92.71 \pm 2.08\) & \(93.38 \pm 1.93\) & \(92.71 \pm 2.08\) & \(92.68 \pm 2.10\)
\\
Transformer
& \(96.27 \pm 1.04\) & \(96.22 \pm 1.07\) & \(96.23 \pm 0.96\) & \(96.20 \pm 1.03\)
& \(85.42 \pm 7.98\) & \(86.01 \pm 8.61\) & \(85.42 \pm 7.98\) & \(85.39 \pm 7.95\)
\\
Multimodal Transformer
& \(97.25 \pm 0.63\) & \(97.27 \pm 0.60\) & \(97.21 \pm 0.67\) & \(97.24 \pm 0.64\)
& \(91.67 \pm 4.81\) & \(91.96 \pm 4.84\) & \(91.67 \pm 4.81\) & \(91.65 \pm 4.82\)
\\
Coupled Mamba
& \(98.31 \pm 0.52\) & \(98.78 \pm 0.57\) & \(98.11 \pm 0.46\) & \(97.92 \pm 0.52\)
& \(91.67 \pm 3.40\) & \(92.97 \pm 2.52\) & \(91.67 \pm 3.40\) & \(91.59 \pm 3.48\)
\\
\midrule
RTR
& \(\mathbf{98.78 \pm 0.12}\) & \(\mathbf{98.86 \pm 0.16}\) & \(\mathbf{98.78 \pm 0.14}\) & \(\mathbf{98.61 \pm 0.10}\)
& \(\mathbf{94.44 \pm 1.96}\) & \(\mathbf{95.05 \pm 1.55}\) & \(\mathbf{94.44 \pm 1.96}\) & \(\mathbf{94.42 \pm 1.99}\)
\\
\bottomrule
\end{tabular}
}
\end{table*}

\subsection{Ablation Study}
\label{sec:case_ablation}
To examine the contribution of the major components of RTR, we conduct ablation studies on both datasets by removing the dimension bridge matrices, modality bridge matrices, and feature fusion module, respectively. These components are designed to capture within-modality cross-dimensional relationships, model cross-modality dependencies, and integrate factor-level and relationship-level information into the final task representation. Therefore, the ablation study evaluates whether the performance of RTR is attributable to a particular component or to their joint contribution.

Table~\ref{tab:ablation} reports the ablation results. Removing any of the three components consistently degrades performance across both datasets and all evaluation metrics, indicating that each component contributes to the effectiveness of RTR. For example, on SEED-IV, removing the dimension bridge matrices reduces accuracy from \(98.78\%\) to \(97.21\%\), while removing the modality bridge matrices further decreases it to \(96.66\%\). These results indicate that both within-modality tensor relationships and cross-modality interactions contribute to emotion recognition, with the larger degradation from removing modality bridges suggesting a particularly important role for integrating complementary EEG and eye-movement information. The effects are more pronounced on the EEG-fMRI dataset. Removing the dimension bridge matrices reduces accuracy from \(94.44\%\) to \(89.58\%\), whereas removing the modality bridge matrices decreases it to \(83.33\%\). The substantially larger impact of modality-bridge removal highlights the importance of explicitly modeling relationships between structurally heterogeneous modalities, particularly in the small-sample EEG-fMRI setting. Together with the SEED-IV results, this pattern suggests that cross-modality relational learning becomes especially valuable when the modalities differ substantially in their structural characteristics. Removing the fusion module produces the largest performance degradation on both datasets, reducing accuracy to \(93.43\%\) on SEED-IV and \(72.92\%\) on EEG-fMRI. This finding supports the value of integrating factor-level and relationship-level information into a task-relevant representation. The particularly large degradation and increased variability on EEG-fMRI further suggest that structured fusion is important for stable prediction when training data are limited and modality heterogeneity is high.
\begin{table*}[htbp]
\centering
\caption{Ablation study of RTR on SEED-IV and EEG-fMRI. Results are reported as mean \(\pm\) standard deviation. Bold values indicate the best performance for each metric.}
\label{tab:ablation}
\scriptsize
\setlength{\tabcolsep}{3.5pt}
\resizebox{\textwidth}{!}{
\begin{tabular}{lcccccccc}
\toprule
\multirow{2}{*}{Method}
&
\multicolumn{4}{c}{SEED-IV}
&
\multicolumn{4}{c}{EEG-fMRI}
\\
\cmidrule(lr){2-5}
\cmidrule(lr){6-9}
&
Acc. & Precision & Recall & Macro-F1
&
Acc. & Precision & Recall & Macro-F1
\\
\midrule
Full RTR
& \(\mathbf{98.78 \pm 0.12}\) & \(\mathbf{98.86 \pm 0.16}\) & \(\mathbf{98.78 \pm 0.14}\) & \(\mathbf{98.61 \pm 0.10}\)
& \(\mathbf{94.44 \pm 1.96}\) & \(\mathbf{95.05 \pm 1.55}\) & \(\mathbf{94.44 \pm 1.96}\) & \(\mathbf{94.42 \pm 1.99}\)
\\
w/o Dimension Bridge Matrix
& \(97.21 \pm 0.54\) & \(97.21 \pm 0.60\) & \(97.20 \pm 0.48\) & \(97.20 \pm 0.54\)
& \(89.58 \pm 4.66\) & \(89.73 \pm 4.74\) & \(89.58 \pm 4.66\) & \(89.58 \pm 4.66\)
\\
w/o Modality Bridge Matrix
& \(96.66 \pm 0.15\) & \(96.65 \pm 0.13\) & \(96.61 \pm 0.16\) & \(96.63 \pm 0.14\)
& \(83.33 \pm 5.89\) & \(83.43 \pm 5.90\) & \(83.33 \pm 5.89\) & \(83.32 \pm 5.89\)
\\
w/o Feature Fusion Module
& \(93.43 \pm 0.88\) & \(93.50 \pm 0.82\) & \(93.28 \pm 0.84\) & \(93.36 \pm 0.84\)
& \(72.92 \pm 7.12\) & \(72.99 \pm 7.12\) & \(72.92 \pm 7.12\) & \(72.89 \pm 7.13\)
\\
\bottomrule
\end{tabular}
}
\end{table*}

Taken together, these ablations help interpret how the downstream tasks use the relational design. Removing modality bridges incurs larger accuracy losses than removing dimension bridges on both datasets: 2.12 versus 1.57 percentage points on SEED-IV, and 11.11 versus 4.86 on EEG-fMRI. Within the evaluated RTR configurations, EEG-fMRI is therefore more sensitive to removal of the explicit cross-modal path, suggesting that this path warrants particular attention when simplifying that model. These task-specific comparisons highlight the value of explicitly organizing the joint use of modalities through the relational design. Moreover, the largest reported degradation on each dataset follows removal of feature fusion. This connects the value of maintaining separate relationship states to how they enter prediction: decoded bridges both participate in structured fusion and remain separate blocks in the final representation (Proposition 7). The cases thus motivate jointly preserving factor-level structure, relational states, and their integration into the prediction stage. These comparisons provide concrete architectural guidance for combining tensor factors, relational states, and structured fusion in multimodal prediction.

\section{Conclusions}
\label{sec:conclusions}
This paper introduced RTR, a relational tensor representation learning framework for multimodal health data. Unlike conventional approaches that primarily focus on representations of individual dimensions or modalities, RTR jointly learns both factor-level and relationship-level representations. Specifically, dimension-specific factors preserve the structural information associated with individual tensor dimensions, while learnable relational bridges characterize dependencies across dimensions within each modality and across heterogeneous modalities. These representations are subsequently integrated through structured dimension-modality fusion to form the final task representation. The central contribution of RTR is a relational perspective on multimodal tensor learning, in which relationships are treated as persistent learnable representations rather than solely as intermediate computational weights. Experimental results show its effectiveness across multimodal learning problems with different data scales and degrees of modality heterogeneity. The ablation results further validate that dimension-level relational learning, cross-modality relational learning, and structured fusion each contribute to the overall performance of the RTR framework. Future research could extend RTR to streaming and incomplete multimodal tensor data, investigate adaptive bridge topologies beyond the current sparse chain structure, and study relational learning in settings involving a larger number of heterogeneous modalities.

\bibliographystyle{plainnat}
\bibliography{refer}
\end{document}